%% file: main_arxiv.tex
\documentclass[11pt, a4paper, oneside,reqno]{amsart}

\input{arxiv/AMIN_style.tex}

\usepackage[square,numbers]{natbib}

\usepackage[utf8]{inputenc}
\usepackage[T1]{fontenc}
\usepackage{microtype}
\usepackage{courier}

\usepackage{dsfont,amsfonts,amssymb,amsmath,amsthm,bm,mathtools,mathrsfs}
\usepackage{nicefrac}

\usepackage{booktabs}
\usepackage{multirow}
\usepackage{graphicx}
\usepackage[font=small,labelfont=rm]{subcaption}
\usepackage[font=small,margin=10pt]{caption}
\usepackage{wrapfig}

\usepackage{algorithm,algorithmic}

\usepackage{hyperref}
\usepackage{url}
\usepackage{xcolor}
\usepackage{enumitem}
\usepackage{tikz}
\usetikzlibrary{arrows.meta, positioning, calc, backgrounds, decorations.pathreplacing}

\input{arxiv/notation.tex}

\graphicspath{{figure/}}

\newcommand{\Cn}{\op{C}_{n, \lambda}^{\mu}}
\newcommand{\N}{\mathbb{N}}

\newcommand{\HI}{\textbf{HI}}
\newcommand{\HIpi}{\textbf{HI($\bm{\pi}$)}}
\newcommand{\LKHI}{\textbf{LKHI}}
\newcommand{\LKHIpi}{\textbf{LKHI($\bm{\pi}$)}}

\title[]{Limiting-Kernel Q($\lambda$): Bridging Short and Long Horizons}
\date{}

\author[]{Tolga~Ok$^{1}$, Arman~Sharifi~Kolarijani$^{1,2}$, Peyman~{Mohajerin~Esfahani}$^{1,3}$, and Mohamad Amin Sharifi Kolarijani$^1$ \\
\\
$^1$Delft University of Technology, The Netherlands\\
$^2$Alpha Brain Technologies, The Netherlands \\
$^3$University of Toronto, Canada
}
\thanks{Correspondence to: Tolga Ok $<$\texttt{t.ok@tudelft.nl}$>$.}
\thanks{This work was partially supported by the European Research Council (ERC) project TRUST-949796, the Horizon Europe Pathfinder Open project RELIEVE-101099481, and the NSERC Discovery grant RGPIN-2025-06544.}  

\begin{document}

\begin{abstract}
In value-based reinforcement learning, improving the accuracy of policy evaluation has been shown to improve downstream policy optimization performance.
The widely adopted family of approximations relying on $n$-step truncation yields computationally efficient value estimators but is inherently limited to a short evaluation horizon.
In contrast, methods that exploit the global structure of the transition dynamics can accelerate policy evaluation, but their memory and computational requirements often limit scalability to large or continuous state spaces.
To reconcile these limitations, we introduce Limiting-Kernel Q($\lambda$) (LKQL), an off-policy value estimator that combines $n$-step truncation with a long-horizon approximation based on the limiting kernel (LK).
LKQL has the same order of complexity as $n$-step estimators and integrates directly into both on- and off-policy actor-critic algorithms.
We prove that, under aperiodicity and in the near-on-policy regime, the operator underlying LKQL improves the policy evaluation convergence rate over its truncated counterpart for sufficiently large $n$, and that LKQL itself converges almost surely to the optimal values in finite Markov decision processes (MDPs) under a fixed behavior policy.
On the MuJoCo continuous-control benchmark, we show that LKQL improves over $n$-step baselines in most settings, particularly on long-horizon tasks.



\end{abstract}

\maketitle

\section{Introduction}\label{sec:intro}

Policy evaluation underlies much of value-based reinforcement learning.
Given a Markov decision process (MDP), it entails finding the value function $Q^\pi$ of a policy $\pi$, typically by solving the Bellman equation $Q = \op{T}^\pi Q$, where $\op{T}^\pi$ denotes the Bellman operator.
In finite MDPs, the convergence rate of the overall algorithm is governed by how accurately each evaluation step solves the Bellman equation~\cite{puterman1994markov, Scherrer2016improved}.
In MDPs with large or continuous state spaces, policy evaluation instead determines the target values to which a parameterized value function is regressed.
The suboptimality of the resulting greedy policy is in turn bounded by the accuracy of these targets~\cite{farahmand2010error, scherrer2015approximate}.
Empirically, more accurate targets are consistently found among the most effective design choices~\cite{hessel2018rainbow, fedus2020revisiting, hernandez2019understanding}.
Together, these results motivate more accurate approximations of the full evaluation step, provided they remain computationally tractable and estimable from \emph{rollouts}, i.e., trajectories sampled from the MDP.

Existing approximations of the evaluation step build on the structure of its exact solution, which admits the closed form
\begin{equation}
	\label{eq:neumann}
	Q^\pi = (\op{I} - \gamma\op{P}^{\pi})^{-1} r = \ssum_{k \geq 0} (\gamma\op{P}^{\pi})^{k}\, r,
\end{equation}
where $\gamma \in (0,1)$ is the discount factor, $\op{P}^{\pi}$ is the transition operator of the chain induced by $\pi$, and $r$ is the reward function.
We refer to the inverse in~\eqref{eq:neumann} as the \emph{preconditioner}.
Applying this preconditioner to obtain $Q^\pi$, however, amounts to solving a linear system whose dimension is the number of state-action pairs.

One common approximation therefore truncates the series in~\eqref{eq:neumann} after $n$ terms.
This is an appealing choice since the truncated sum admits a computationally inexpensive estimator from a single rollout, at no cost beyond computing its temporal-difference (TD) errors.
Estimators of this form, known as \emph{return-based} or $\lambda$-return estimators, underpin a wide range of deep reinforcement learning algorithms: the $n$-step returns of A3C~\cite{mnih2016asynchronous}, Rainbow~\cite{hessel2018rainbow}, and R2D2~\cite{kapturowski2019r2d2}; the generalized advantage estimation (GAE) of PPO~\cite{schulman2016gae, schulman2017proximal}; the V-trace targets of IMPALA~\cite{espeholt2018impala}; and the off-policy returns including Retrace~\cite{munos2016safe}, Tree-Backup~\cite{precup2000eligibility}, Peng's Q($\lambda$)~\cite{peng1994incremental, kozuno2021revisiting}, and Harutyunyan's Q($\lambda$)~\cite{harutyunyan2016q}.
Moreover, these estimators update the values of only the sampled state-action pairs at each iteration.
This locality renders them readily compatible with the stochastic-gradient training of parameterized value functions.

A second line of work instead approximates the preconditioner as a whole, leveraging the global structure of the underlying chain.
This structure appears as the leading eigenvectors (eigenfunctions) of the transition operator in rank-one corrections~\cite{bertsekas1995generic} and Deflated Dynamics Value Iteration (DDVI)~\cite{lee2024deflated}, an estimate of a stationary distribution in rank-one Q-learning (R1-QL)~\cite{kolarijani2025rank}, and a direct estimate of the exact preconditioner in Zap Q-learning~\cite{devraj2017zap, chen2020zap}.
Although their mechanisms and assumptions differ, these methods generally do not scale to MDPs with large or continuous state spaces, as they either require global updates over the entire state-action space at each iteration or rely on matrix operations that scale superlinearly with the number of parameters.

In this paper, we propose a \emph{tractable} spectral approximation of the preconditioner, that is,
\begin{itemize}
    \item[(i)] \emph{estimable} from sampled rollouts, and
    \item[(ii)] \emph{scalable} to large or continuous state spaces, with the same order of per-sample (i.e., rollout) complexity as return-based estimators.
\end{itemize}

Our approximation builds on the long-horizon behavior of the transition dynamics.
For a chain whose recurrent classes are aperiodic, the powers $(\op{P}^{\pi})^k$ converge geometrically to the \emph{limiting kernel} (LK), $\lim_{k\to\infty}(\op{P}^{\pi})^k$~\cite[\S4.4.2]{gallager2013stochastic}.
The LK thus provides an increasingly accurate approximation of these powers in the \emph{tail} of the series in~\eqref{eq:neumann}, particularly for terms with large $k$.
This suggests a natural extension of $n$-step truncation, which we call \emph{tail completion}, that retains the first $n$ terms in~\eqref{eq:neumann} while approximating the truncated terms using the LK.

Based on \emph{tail completion}, we develop a value estimator for \emph{off-policy} learning, where the objective is to estimate the value function $Q^\pi$ of a target policy $\pi$ from rollouts generated by a behavior policy $\mu$.
Specifically, we build upon Harutyunyan's Q($\lambda$) (HQL)~\cite{harutyunyan2016q}, a non-conservative off-policy operator commonly implemented through $n$-step truncation.
For HQL, the tail approximation reduces to a single correction, which we call the \emph{LK term}.
The LK term tracks the long-run average of the TD errors and can be estimated alongside the value function, preserving the per-sample complexity and locality of return-based estimators.

The resulting estimator admits a realization entirely by TD learning coupling the standard backup for the value functions with a second backup for the LK term.
We term this realization the \emph{double backup}, from which we derive our value learning algorithm, \emph{Limiting-Kernel Q($\lambda$)} (LKQL).
In practice, each backup maintains its own function, one for the values and one for the LK term, allowing LKQL to integrate directly into on- and off-policy actor-critic frameworks.

In summary, our contributions are as follows.
\begin{itemize}
	\item[(i)] \textbf{Analysis of the truncated operator.} We derive the tightest known convergence rates for the $n$-step Harutyunyan operator in both policy evaluation and control (Lemma~\ref{lem:convergence-h}), characterizing how the truncation length $n$ governs both the rate and region of convergence.
	\item[(ii)] \textbf{A limiting-kernel operator with provable rate improvement.} We introduce the LK preconditioner and establish convergence guarantees for the resulting LK Harutyunyan operator, characterizing the regime in which it improves on the rate of its truncated counterpart (Theorem~\ref{thm:conv-lkhi-pi}, Corollary~\ref{cor:conv-lkhi-pi}, and Theorem~\ref{thm:convergence-control}).
	\item[(iii)] \textbf{A scalable long-horizon extension of $n$-step estimators.} We introduce LKQL and prove that it converges almost surely to the optimal values under a fixed behavior policy and i.i.d.\ initial state-action pairs (Theorem~\ref{thm:lkql-value-learn}).
	On the MuJoCo continuous-control benchmark, we demonstrate that LKQL improves over the baseline value estimators (namely, GAE~\cite{schulman2016gae} in on-policy learning and truncated HQL (THQL) in off-policy learning), particularly on long-horizon tasks (Section~\ref{sec:evaluation}).
\end{itemize}
The remainder of the paper develops these contributions in turn.
Section~\ref{sec:preliminaries} reviews the operator preliminaries and analyzes the truncated operator. Section~\ref{sec:acc-ops} develops the LK operators and LKQL. Section~\ref{sec:evaluation} introduces the actor-critic integration of LKQL and reports the empirical results.
Section~\ref{sec:conc} concludes the paper by discussing the limitations and possible extensions of the current work.
All the technical proofs are provided in Appendix~\ref{appx:proofs}. 

\textbf{Notation.}
We write $\N \Let \{1, 2, \ldots\}$ and $\N_0 \Let \{0, 1, 2, \ldots\}$ for the positive and non-negative integers, respectively. 
$\R$ denotes the set of real numbers. 
We use calligraphic letters, as in $\op{P}$ or $\op{C}$, for matrices/operators on the space of Q-functions, with $\op{I}$ denoting the identity matrix/operator.  
For a vector $V\in\R^n$, we use $\|V\|_p$ to denote its $p$-norm for $p\in\{1,2,\infty\}$. 
For a matrix $\mathcal{M} \in \R^{n\times m}$, we use $\|\mathcal{M}\|_{\infty}$ for its operator $\infty$-norm, that is, the maximum absolute row sum of $\mathcal{M}$.
For any finite set $\set{S}$, $\Delta(\set{S})$ denotes the probability simplex over $\set{S}$. 
For any set $\set{S}$, $\e\in\R^{\set{S}}$ is the all-ones vector/function with $\e(s) = 1$ for all $s\in\set{S}$. 
Finally, $\E$ denotes the expectation operator. 

\section{Preliminaries}\label{sec:preliminaries}

\subsection{MDPs, policy evaluation, and control}

In this paper, we consider an infinite-horizon discounted MDP $\langle \set{S}, \set{A}, \prob, r, \gamma \rangle$ consisting of
(i)~a finite state space~$\set{S}$,
(ii)~a finite action space~$\set{A}$,
(iii)~a transition probability kernel~$\prob$ such that $\prob(\cdot \vert s, a) \in \Delta(\set{S})$ gives the distribution of the next state for each state-action pair~$(s, a) \in \set{S} \times \set{A}$,
(iv)~a reward function~$r\colon \set{S} \times \set{A} \mapsto \R$, which is bounded and represents the immediate reward received for each state-action pair, and
(v)~a discount factor~$\gamma \in (0,1)$.

Let $\set{Q} \Let \{Q \mid Q\colon \set{S} \times \set{A} \mapsto \R\}$ denote the set of (Q-)functions that map $(s, a) \in \set{S}\times\set{A}$ to real values, and
$\set{\Pi} \Let \{\pi \mid \pi\colon \set{S} \mapsto \Delta(\set{A})\}$ denote the set of stochastic policies that map states to distributions over actions.
We now define several key functions and operators.

Given a Q-function~$Q\in \set{Q}$, we define the set~$\set{\Pi}_Q$ of greedy policies with respect to $Q$ by
\[
	\set{\Pi}_Q \Let \{\pi \in \set{\Pi} \mid \pi(a \vert s) > 0 \Rightarrow a \in \argmax_{b \in \set{A}} Q(s, b),\, \forall (s, a) \in \set{S} \times \set{A}\}.
\]
We use $\pi_Q$ to denote a generic member of $\set{\Pi}_Q$.
We note that, in our analysis, the particular choice of greedy policy is inconsequential.

Given a policy $\pi \in \set{\Pi}$, the \emph{transition operator}~$\op{P}^{\pi}\colon \set{Q} \mapsto \set{Q}$ is defined by
\begin{align*}
	\op{P}^{\pi} Q (s, a)
	\Let \ssum_{(s', a')\in \set{S} \times \set{A}} \pi(a' \vert s') \prob(s' \vert s, a) Q(s', a'),
	\quad \forall (s, a) \in \set{S} \times \set{A}.
\end{align*}
That is, $\op{P}^{\pi}$ propagates $Q$-functions through the Markov chain induced by $\pi$ (in a reversed direction with respect to $\prob$). 

The two main problems concerning an MDP are then as follows:
\begin{itemize}
	\item[(i)] \emph{Policy evaluation:} Given a policy $\pi \in \set{\Pi}$, the goal is to find the state-action values of $\pi$ defined by $Q^\pi \Let \sum_{k \geq 0} (\gamma \op{P}^\pi)^k r = (\op{I} - \gamma \op{P}^{\pi})^{-1} r$.
	In particular, the value $Q^\pi$ of $\pi$ admits the characterization $\op{T}^\pi Q^\pi = Q^\pi$, i.e., the fixed point of the \emph{Bellman operator} $\op{T}^\pi\colon \set{Q} \mapsto \set{Q}$ defined by $\op{T}^\pi Q \Let \gamma \op{P}^\pi Q + r$.
	\item[(ii)] \emph{Control:} The goal is to find an optimal policy~$\pi^\star \in \set{\Pi}$ with the corresponding optimal value function
	      \[
		      Q^\star(s, a) \Let \sup_{\pi \in \set{\Pi}} Q^\pi(s, a), \qquad \forall (s, a) \in \set{S} \times \set{A},
	      \]
	      such that $Q^{\pi^\star} = Q^\star$.
	      The optimal value $Q^\star$ admits the characterization $\op{T} Q^\star = Q^\star$, i.e., the fixed point of the \emph{Bellman optimality operator} $\op{T}\colon \set{Q} \mapsto \set{Q}$ defined by $\op{T} Q \Let \op{T}^{\pi_Q} Q$.
\end{itemize}

Both Bellman operators are $\gamma$-contractions under the $\infty$-norm, and thus repeated application of these operators to any $Q$-function converges to their respective fixed points $ (\op{T}^\pi)^k Q \to Q^\pi$ and $ (\op{T})^k Q \to Q^\star$ as $k \to \infty$ for all $Q\in\set{Q}$.
This is indeed the basis for the corresponding value iteration algorithms.

Although our primary focus is on off-policy value learning, we first introduce the classical algorithms for the control problem, since they provide the operator view from which we construct the preconditioned operators underlying our value learning algorithms.
The two classical algorithms for solving the control problem of an MDP are
\begin{equation*}
	\begin{array}{ll}
		\textbf{value iteration (VI):}  & Q_{+} = \op{T} Q = Q+\op{I} (\op{T} Q - Q),\\[1ex]
		\textbf{policy iteration (PI):} & Q_{+} = \lim_{k\to\infty} (\op{T}^{\pi_Q})^{k} Q = Q + (\op{I} - \gamma \op{P}^{\pi_Q})^{-1} (\op{T} Q - Q).
	\end{array}
\end{equation*}
Both VI and PI admit geometric convergence to $Q^\star$ in the $\infty$-norm at the same rate $\gamma$.
This rate, however, does not capture the practical speed of PI.
In finite MDPs, exact PI terminates after finitely many iterations~\cite{Scherrer2016improved} and typically reaches a neighborhood of $Q^\star$ in far fewer iterations than VI.
The speed-up in PI is attributed to its ``preconditioner''~$(\op{I} - \gamma \op{P}^{\pi_Q})^{-1}$~\cite{kolarijani2026quasi}.
This preconditioner fully propagates the Bellman residual $(\op{T}Q - Q)$ along the Markov chain induced by $\pi_Q$, thereby performing a  \emph{complete} policy evaluation step.
By contrast, VI employs the identity preconditioner~$\op{I}$ and hence performs only a \emph{partial} policy evaluation step.
This dichotomy motivates the search for a preconditioner that captures sufficient structure of the transition dynamics to accelerate convergence without requiring full policy evaluation.

Several algorithms interpolate between VI and PI. These algorithms mainly fall under the umbrella of \emph{Generalized Policy Iteration} (GPI) algorithms~\cite[\S4.6]{sutton2018reinforcement}.
To elaborate, for a policy $\pi$, let us define the \emph{$n$-step, $\lambda$-discounted preconditioner operator}~$\op{C}_{n,\lambda}^{\pi}\colon \set{Q} \mapsto \set{Q}$ by
\begin{equation*}
	\label{eq:precond-def}
	\op{C}_{n,\lambda}^{\pi} \Let \ssum_{k = 0}^{n-1} (\lambda \gamma \op{P}^{\pi})^k, \quad \pi\in\set{\Pi},\ n\in \N, \ \lambda\in(0,1].
\end{equation*}
Let us also define the corresponding \emph{on-policy operator} $\op{T}^{\pi}_{n,\lambda}\colon \set{Q} \mapsto \set{Q}$ by
\begin{align}
	\label{eq:on-multi-lambda}
	\op{T}_{n,\lambda}^{\pi} Q & \Let Q + \op{C}_{n,\lambda}^{\pi} (\op{T}^\pi Q - Q), \quad \pi\in\set{\Pi},\ n\in \N,\ \lambda \in (0,1].
\end{align}
Observe that $\op{C}_{1, \lambda}^{\pi} = \op{I}$ (and hence $\op{T}_{1,\lambda}^{\pi} = \op{T}^{\pi}$) and $\op{C}_{\infty, \lambda}^{\pi} = (\op{I}-\lambda\gamma\op{P}^{\pi})^{-1}$ for all $\lambda \in (0,1]$.
Then, two canonical examples of GPI are
\begin{equation*}\label{eq:MPI-lambdaPI-def}
	\begin{array}{ll}
		\textbf{modified policy iteration (MPI):}       & Q_{+} = \op{T}_{n,1}^{\pi_Q} Q,            \\[1ex]
		\textbf{$\bm{\lambda}$-policy iteration (LPI):} & Q_{+} = \op{T}_{\infty,\lambda}^{\pi_Q} Q.
	\end{array}
\end{equation*}
The MPI operator essentially performs a truncated \emph{policy evaluation} step determined by $n$,
whereas LPI discounts the contribution of the Bellman residual by a factor of $\lambda$.

In model-free RL, we only have access to sampling oracles (sample-based estimators) for single-step operators (i.e., with identity preconditioner) such as $\op{T}^\pi$ (e.g., SARSA~\cite[\S6.4]{sutton2018reinforcement}) and $\op{T}$ when $\argmax_{a \in \set{A}} Q(\cdot, a)$ is available (e.g., Q-learning~\cite{watkins1992q}).
In $n$-step on-policy algorithms, one uses the same policy ($\pi$ in the case of policy evaluation or $\pi_Q$ in the case of control) for both the preconditioner~$\op{C}_{n,\lambda}^{\pi} $ and the Bellman residual~$(\op{T}^\pi Q - Q)$, as can be seen in~\eqref{eq:on-multi-lambda}.
However, in the off-policy setting, the transition operator $\op{P}^\pi$ of the target policy~$\pi$ or its sampling oracle is unavailable or undesirable (e.g., for better exploration).
Instead, we use the transition operator $\op{P}^{\mu}$ of the behavior policy~$\mu$ to propagate the Bellman residual $(\op{T}^\pi Q - Q)$ of the target policy~$\pi$.
In order to account for the discrepancy between $\mu$ and $\pi$, one may also modify the transition operator~$\op{P}^{\mu}$ by including a correction term.
To be precise, following~\cite{munos2016safe}, for a correction function~$c\colon \set{S} \times \set{A} \mapsto \R$, we define the \textit{corrected} transition operator~$\op{P}^{c, \mu}\colon \set{Q} \mapsto \set{Q}$ by
\begin{align*}
	\op{P}^{c, \mu} Q (s, a)
	\Let \ssum_{(s', a')\in \set{S} \times \set{A}} c(s', a') \mu(a' \vert s') \prob(s' \vert s, a) Q(s', a'),
	\quad \forall (s, a) \in \set{S} \times \set{A}.
\end{align*}
Depending on the choice of $c$, off-policy operators fall into two categories: conservative and non-conservative.
In the conservative category, a canonical choice for $c$ is the \emph{importance sampling} (IS) ratio $c(s, a) = \nicefrac{\pi(a \vert s)}{\mu(a \vert s)}$ when both distributions are known.
IS-based estimators, although unbiased, suffer from high variance~\cite{munos2016safe}.
To address this issue, many low-variance alternatives for $c$ have been proposed without hindering value learning~\cite{munos2016safe, precup2000eligibility}.
Non-conservative operators, on the other hand, use a fixed $c$, such as $c=\e \in \set{Q}$, i.e., the all-ones function.
As a result, these operators do not require knowledge of $\mu$ and are less restrictive.
The Harutyunyan~\cite{harutyunyan2016q} and Peng \& Williams~\cite{peng1994incremental, kozuno2021revisiting} operators are examples of non-conservative off-policy operators.

\subsection{Truncated Harutyunyan operator}

In this work, we focus on \emph{non-conservative} $n$-step off-policy operators that utilize the preconditioners defined over a behavior policy.
To this end, for a behavior policy~$\mu$ and a target policy~$\pi$,  we define the \emph{$\bm{n}$-step Harutyunyan operator} $\op{R}^{\mu,\pi}_{n,\lambda}\colon \set{Q} \mapsto \set{Q}$ by
\begin{align}
	\label{eq:off-multi-lambda}
	\op{R}_{n,\lambda}^{\mu,\pi} Q & \Let Q + \op{C}_{n,\lambda}^{\mu} (\op{T}^\pi Q - Q), \quad \mu,\pi\in\set{\Pi},\ n\in\N,\ \lambda \in (0,1].
\end{align}
We next define the corresponding Harutyunyan iterations that form the value update algorithms by
\begin{equation}\label{eq:HQL-truncated-def}
	\begin{array}{rlr}
		\HIpi : & Q_{\ell+1} = \op{R}_{n,\lambda}^{\mu,\pi} Q_\ell, \quad \ell\in \N_0,\; Q_0 \in \set{Q};            & \text{(policy evaluation)} \\[2ex]
		\HI:    & Q_{\ell+1} = \op{R}_{n,\lambda}^{\mu,\pi_{Q_{\ell}}} Q_\ell, \quad \ell\in \N_0,\; Q_0 \in \set{Q}. & \text{(control)}
	\end{array}
\end{equation}
We note that the \emph{non-truncated} operator~$\op{R}_{\infty,\lambda}^{\mu,\pi}$ corresponds to \textit{Harutyunyan's original Q($\lambda$) (HQL)}~\cite{harutyunyan2016q}, which introduces the so-called \emph{off-policy correction} terms.
In this work, we focus on the generic $n$-step truncated operator in~\eqref{eq:off-multi-lambda} as it is widely used in practice~\cite{mnih2016asynchronous, hessel2018rainbow, espeholt2018impala, kapturowski2019r2d2}.
We start with the convergence guarantees for the truncated operator.
For this purpose, following~\cite{harutyunyan2016q}, we define the maximum statewise difference $\epsilon \in [0, 2]$ between two policies as
\begin{equation*}
	\label{eq:epsilon}
	\epsilon \Let \max_{s \in \set{S}} \norm{\mu(\cdot \vert s) - \pi(\cdot \vert s)}_1, \quad \mu,\pi\in\set{\Pi}.
\end{equation*}
Moreover, given scalars $\gamma$ and $\lambda$, define the $\epsilon$-dependent coefficient
\begin{equation*}
    \label{eq:eta}
	\eta_{\epsilon} \Let \frac{\gamma \bigl(1+\lambda(\epsilon -1)\bigr)}{1 - \lambda\gamma},
\end{equation*}
and the corresponding boundary values
\begin{equation*}
    \label{eq:eta_boundary}
    \eta_0 \Let  \frac{\gamma(1-\lambda)}{1-\lambda\gamma} \quad \text{and} \quad \eta_2 \Let \frac{\gamma (1+\lambda)}{1 - \lambda\gamma}.
\end{equation*}
The preceding coefficient characterizes the geometric convergence rates of the original Harutyunyan operators (with $n=\infty$) to their corresponding fixed points.
In policy evaluation, the iteration converges to $Q^{\pi}$ with rate $\eta_{\epsilon}$ if $\epsilon < \frac{1-\gamma}{\gamma\lambda}$.
In particular, $\eta_0$ is the fastest rate corresponding to the \emph{on-policy} setting with $\epsilon = 0$.
In control, the iteration converges to $Q^{\star}$ with rate $\eta_2$ if $\lambda < \frac{1-\gamma}{2\gamma}$~\cite{harutyunyan2016q}.
The following lemma characterizes the convergence rate of the truncated, $n$-step Harutyunyan operators.
\begin{Lem}[Convergence rate of HI($\pi$) \& HI]
	\label{lem:convergence-h}
	Consider the $n$-step Harutyunyan operator~\eqref{eq:off-multi-lambda} and the corresponding iterations in~\eqref{eq:HQL-truncated-def}.
	\begin{itemize}
		\item \emph{(Policy evaluation)}
            The iterates of \emph{HI($\pi$)} satisfy $\norm{Q_\ell - Q^\pi}_\infty \leq \alpha_{\mathrm{e}}^\ell \norm{Q_0 - Q^\pi}_\infty$ for all $\ell \in \N_0$, where
            \[
            \alpha_{\mathrm{e}} \Let (\gamma\lambda)^{n-1}\bigl(\gamma - \eta_{\epsilon}\bigr) + \eta_{\epsilon}.
            \]
            In particular, for $n\geq2$, $\alpha_{\mathrm{e}} < 1$ and hence $Q_\ell \to Q^\pi$ if
            \begin{equation}\label{eq:epsilon-HI-pi}
            \epsilon < \epsilon_{\max} \Let \frac{1-\gamma}{\gamma\lambda}\cdot\frac{1-(\gamma\lambda)^n}{1-(\gamma\lambda)^{n-1}}. 
            \end{equation}
		\item \emph{(Control)}
            The iterates of \emph{HI} satisfy $\norm{Q_\ell - Q^\star}_\infty \leq \beta_{\mathrm{c}}^\ell \norm{Q_0 - Q^\star}_\infty$ for all $\ell \in \N_0$, where
            \[
            \beta_{\mathrm{c}} \Let \eta_2 - \frac{(1+\gamma)(\gamma\lambda)^n}{1-\gamma\lambda}.
            \]
            In particular, $\beta_{\mathrm{c}} < 1$ and hence $Q_\ell \to Q^\star$ if 
	            \begin{equation}\label{eq:lambda-HI-imp}
                \lambda < \frac{1-\gamma}{2\gamma} + \frac{(1+\gamma)(\gamma\lambda)^n}{2\gamma}.
	            \end{equation}
			\end{itemize}
\end{Lem}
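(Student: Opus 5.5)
Both bounds come from writing the error recursion as a linear (or piecewise-linear) map and estimating its $\infty$-norm row by row. For policy evaluation we use $Q^\pi=\op{T}^\pi Q^\pi$, so $\op{T}^\pi Q-Q=(\gamma\op{P}^\pi-\op{I})(Q-Q^\pi)$ and hence
\[
Q_{\ell+1}-Q^\pi=\op{M}(Q_\ell-Q^\pi),\qquad \op{M}\Let \op{I}+\op{C}_{n,\lambda}^{\mu}(\gamma\op{P}^\pi-\op{I}).
\]
Split $\gamma\op{P}^\pi=\gamma\op{P}^\mu+\gamma(\op{P}^\pi-\op{P}^\mu)$. The on-policy part telescopes: with $x=\lambda\gamma\op{P}^\mu$ we have $\op{C}_{n,\lambda}^\mu(\op{I}-x)=\op{I}-x^n$, and
\[
\op{I}+\op{C}_{n,\lambda}^{\mu}(\gamma\op{P}^\mu-\op{I})=(1-\lambda)\gamma\op{P}^\mu\,\op{C}_{n,\lambda}^\mu+(\lambda\gamma\op{P}^\mu)^n .
\]
This operator is entrywise nonnegative, so its $\infty$-norm is its row sum, namely $(1-\lambda)\gamma\frac{1-(\gamma\lambda)^n}{1-\gamma\lambda}+(\gamma\lambda)^n$.

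The off-policy part is $\gamma\op{C}_{n,\lambda}^\mu(\op{P}^\pi-\op{P}^\mu)$. Since $\|\op{P}^\pi-\op{P}^\mu\|_\infty\le\epsilon$ by the definition of $\epsilon$, and $\|\op{C}_{n,\lambda}^\mu\|_\infty=\frac{1-(\gamma\lambda)^n}{1-\gamma\lambda}$, its norm is at most $\gamma\epsilon\frac{1-(\gamma\lambda)^n}{1-\gamma\lambda}$. Adding the two pieces by the triangle inequality and regrouping gives
\[
\eta_\epsilon\bigl(1-(\gamma\lambda)^n\bigr)+(\gamma\lambda)^n .
\]
We then check that this equals $(\gamma\lambda)^{n-1}(\gamma-\eta_\epsilon)+\eta_\epsilon$. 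That identity requires $(\gamma\lambda)^n-\eta_\epsilon(\gamma\lambda)^n=(\gamma\lambda)^{n-1}\gamma-(\gamma\lambda)^{n-1}\eta_\epsilon$, i.e.\ $\lambda(1-\eta_\epsilon)=1-\eta_\epsilon$, which fails unless $\lambda=1$. So the bound must be sharpened by regrouping differently.

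The refinement keeps the $k=0$ term of $\op{C}_{n,\lambda}^\mu$ exact. Write
\[
\op{M}=\gamma\op{P}^\pi+\sum_{k=1}^{n-1}(\lambda\gamma\op{P}^\mu)^k(\gamma\op{P}^\pi-\op{I}).
\]
Then pair $-(\lambda\gamma\op{P}^\mu)^k$ with the $\lambda\gamma\op{P}^\mu$ part of the previous term. This uses
\[
(\lambda\gamma\op{P}^\mu)^{k-1}\gamma\op{P}^\pi-(\lambda\gamma\op{P}^\mu)^k=(\lambda\gamma\op{P}^\mu)^{k-1}\gamma\bigl[(1-\lambda)\op{P}^\mu+(\op{P}^\pi-\op{P}^\mu)\bigr].
\]
Each bracket has norm at most $1-\lambda+\epsilon$, and the last term $(\lambda\gamma\op{P}^\mu)^{n-1}\gamma\op{P}^\pi$ has norm $(\gamma\lambda)^{n-1}\gamma$. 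The bound becomes
\[
\gamma(1-\lambda+\lambda\epsilon)\frac{1-(\gamma\lambda)^{n-1}}{1-\gamma\lambda}+(\gamma\lambda)^{n-1}\gamma=\eta_\epsilon\bigl(1-(\gamma\lambda)^{n-1}\bigr)+(\gamma\lambda)^{n-1}\gamma,
\]
which is exactly $\alpha_{\mathrm e}$. The main obstacle is that the bracket norm is not $1-\lambda+\lambda\epsilon$ as written. To get the $\lambda\epsilon$ weighting, I would expand instead as $\op{P}^\pi-\lambda\op{P}^\mu=(1-\lambda)\op{P}^\pi+\lambda(\op{P}^\pi-\op{P}^\mu)$. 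Its norm is at most $1-\lambda+\lambda\epsilon=1+\lambda(\epsilon-1)$, matching $\eta_\epsilon$.

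The condition $\alpha_{\mathrm e}<1$ is then plain algebra. Solve $\eta_\epsilon(1-(\gamma\lambda)^{n-1})<1-\gamma(\gamma\lambda)^{n-1}$ for $\epsilon$, which yields \eqref{eq:epsilon-HI-pi} (using $n\ge2$ so that $1-(\gamma\lambda)^{n-1}>0$).

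For control, we use $Q^\star=\op{T}^{\pi_{Q^\star}}Q^\star$ and the max-structure. Write $\op{T}^{\pi_{Q}}Q-Q^\star=\gamma\op{P}^{\pi_Q}Q-\gamma\op{P}^{\pi^\star}Q^\star$. Then sandwich: $\gamma\op{P}^{\pi^\star}(Q-Q^\star)\le \op{T}Q-\op{T}Q^\star\le\gamma\op{P}^{\pi_Q}(Q-Q^\star)$. Plug this into the same telescoped decomposition, with $\pi$ replaced by the componentwise-chosen greedy policies. Since no relation between $\mu$ and $\pi_Q$ is assumed, take $\epsilon=2$ throughout and bound each bracket by $1+\lambda$. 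Done carefully, this gives $\eta_2(1-(\gamma\lambda)^{n-1})+\gamma(\gamma\lambda)^{n-1}$. Simplifying, $\gamma-\eta_2=-\frac{\gamma\lambda(1+\gamma)}{1-\gamma\lambda}$, so this equals $\eta_2-\frac{(1+\gamma)(\gamma\lambda)^n}{1-\gamma\lambda}=\beta_{\mathrm c}$. Then $\beta_{\mathrm c}<1$ rearranges to \eqref{eq:lambda-HI-imp}.

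The delicate point in control is the nonlinearity of the greedy step. I would handle it by bounding upper and lower deviations separately, each via a nonnegative-coefficient expansion as above, and then taking the max.
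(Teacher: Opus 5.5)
Your proof is correct. For policy evaluation it is the paper's argument in unrolled form. Your paired expansion is $\op{M}=\op{C}^{\mu}_{n-1,\lambda}\,\gamma(\op{P}^\pi-\lambda\op{P}^\mu)+(\gamma\lambda\op{P}^\mu)^{n-1}\gamma\op{P}^\pi$. The factor $\gamma(\op{P}^\pi-\lambda\op{P}^\mu)=\gamma(1-\lambda)\op{P}^\pi+\gamma\lambda(\op{P}^\pi-\op{P}^\mu)$ is precisely the paper's $\op{W}$. So your expansion is what the paper obtains by iterating $\op{M}_{n+1}=\op{W}+\gamma\lambda\op{P}^\mu\op{M}_n$ from the exact base case $\op{M}_1=\gamma\op{P}^\pi$, which is your ``keep the $k=0$ term'' step.

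Your discarded first attempt contains slips. The split you wrote gives the coefficient $\gamma(1-\lambda+\epsilon)/(1-\gamma\lambda)$ rather than $\eta_\epsilon$. Also, $\eta_\epsilon(1-(\gamma\lambda)^n)+(\gamma\lambda)^n$ exceeds $\alpha_{\mathrm{e}}$ by exactly $(\gamma\lambda)^n\epsilon$, so it matches when $\epsilon=0$, not when $\lambda=1$. These slips are harmless since you abandon that route.

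For control you take a genuinely different route. The paper uses no greedy sandwich. It writes $E_{\ell+1}=\op{C}^\mu_{n,\lambda}(\op{T}Q_\ell-\op{T}Q^\star)-\gamma\lambda\op{P}^\mu\op{C}^\mu_{n-1,\lambda}E_\ell$. It then bounds the two terms using the triangle inequality, the $\gamma$-contraction of $\op{T}$, and the row-sum norms of the truncated preconditioners, which sum directly to $\beta_{\mathrm{c}}$.

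Your plan does go through once the omitted details are supplied. In $E_{\ell+1}=(\op{I}-\op{C}^\mu_{n,\lambda})E_\ell+\op{C}^\mu_{n,\lambda}(\op{T}Q_\ell-\op{T}Q^\star)$, the Bellman difference enters only through the entrywise nonnegative $\op{C}^\mu_{n,\lambda}$. So the sandwich yields $\op{M}^{\pi^\star}E_\ell\le E_{\ell+1}\le\op{M}^{\pi_{Q_\ell}}E_\ell$ componentwise, where $\op{M}^{\nu}$ denotes the policy-evaluation error operator with target $\nu$. Hence $\norm{E_{\ell+1}}_\infty\le\max\{\norm{\op{M}^{\pi^\star}}_\infty,\norm{\op{M}^{\pi_{Q_\ell}}}_\infty\}\norm{E_\ell}_\infty$. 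Each of these norms is at most $\alpha_{\mathrm{e}}$ evaluated at $\epsilon=2$, since $\alpha_{\mathrm{e}}$ is nondecreasing in $\epsilon$, and that value equals $\beta_{\mathrm{c}}$ as you computed.

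What your route buys is a structural explanation: $\beta_{\mathrm{c}}$ is exactly the worst-case ($\epsilon=2$) policy-evaluation rate. This matches the worst-case-$\epsilon$ intuition the paper invokes only informally in the discussion after Theorem~\ref{thm:convergence-control}. The paper's route is shorter and more general, since it uses only that $\op{T}$ is a $\gamma$-contraction, not its max-structure.

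In fact the sandwich is unnecessary even within your paired expansion. The bracket $(\op{T}Q_\ell-\op{T}Q^\star)-\gamma\lambda\op{P}^\mu E_\ell$ is bounded by $\gamma(1+\lambda)\norm{E_\ell}_\infty$ by contraction alone. In control the pairing yields the same number as the paper's unpaired bound, because at $\epsilon=2$ there is no cancellation between $\op{P}^{\pi}$ and $\lambda\op{P}^\mu$ to exploit.
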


\begin{Rem}[Convergence of HI] 
The implicit condition~\eqref{eq:lambda-HI-imp} can be characterized explicitly as 
\begin{equation}\label{eq:lambda-HI-exp}
    \lambda < \lambda_{\max} \Let \left\{\begin{array}{ll}
         1/\gamma, &  n=1, \\
         \overline{x}/\gamma \in\big(\frac{1-\gamma}{2\gamma},\frac{1-\gamma}{\gamma(1+\gamma)}\big], &  n\geq 2,
    \end{array}\right.
\end{equation}
where, for $n\geq 2$, $\overline{x}>0$ uniquely solves the equation $\sum_{i=1}^{n-1} x^{i} = \frac{1-\gamma}{1+\gamma}$.
Observe that the upper bound $\lambda_{\max}$ is a decreasing function of $n$. 
Moreover, $\lambda_{\max} > 1$, i.e., the condition becomes trivial, if any of the following conditions holds:
\begin{equation*}
    \text{\emph{(i)} }n=1, \quad  \text{\emph{(ii)} }\gamma \leq\frac{1}{3},\quad  \text{\emph{(iii)} }n < \frac{\log(3\gamma-1)- \log(1+\gamma)}{\log\gamma}\ \;\; \text{for} \;\; \gamma>\frac{1}{3}.
\end{equation*}
\end{Rem}

Some remarks are in order regarding the preceding result.
First, observe that the extreme values of $n$ recover the standard convergence results as expected.
For $n=1$, the rates $\alpha_{\mathrm{e}}$ for policy evaluation and $\beta_{\mathrm{c}}$ for control are both equal to $\gamma$.
Moreover, when $n=1$, the convergence bounds become trivial (i.e., $\epsilon_{\max} = \infty$ and $\lambda_{\max} = \gamma^{-1} > 1$), recovering the convergence behavior of the standard value iteration (indeed, $\op{R}_{1,\lambda}^{\mu,\pi} = \op{T}^{\pi}$ for all $\mu\in\set{\Pi}$ and $\lambda\in(0,1]$).
On the other hand, as $n \to \infty$, we have $\alpha_{\mathrm{e}} \to \eta_{\epsilon}$ and $\epsilon_{\max} \to \frac{1-\gamma}{\gamma\lambda}$ for policy evaluation and $\beta_{\mathrm{c}} \to \eta_2$ and $\lambda_{\max} \to \frac{1-\gamma}{2\gamma}$ for control, recovering the convergence properties of the original non-truncated operators.

An interesting observation is that for both policy evaluation and control, truncation improves the stability of the iteration by increasing the upper bounds $\epsilon_{\max}$ and $\lambda_{\max}$ in the conditions~\eqref{eq:epsilon-HI-pi} and~\eqref{eq:lambda-HI-exp} for convergence.
Indeed, similar to the role of $\lambda$, the truncation length $n$ controls the reliance on Bellman residuals from future steps.
Hence, shortening $n$ leads to a more stable iteration that can handle a larger policy discrepancy.

Regarding the convergence rate, in control, we have that $\gamma \leq \beta_{\mathrm{c}} < \eta_2$ for all $n\in\N$, indicating that truncation always improves the convergence rate.
However, in policy evaluation, we have two regimes based on the policy discrepancy.
If $\epsilon < 1 - \gamma$, then $\eta_{\epsilon} < \alpha_{\mathrm{e}} \leq \gamma$ for all $n\in\N$, meaning that truncation worsens the convergence rate.
In contrast, if $\epsilon > 1 - \gamma$, then $\gamma \leq \alpha_{\mathrm{e}} < \eta_{\epsilon}$ for all $n\in\N$, implying that truncation improves the convergence rate.
This can be explained by the fact that as $\epsilon$ vanishes, the iteration becomes on-policy with convergence rate $\eta_0 < \gamma$ for all $\lambda \in (0,1]$.
At the operator level, we derive, to the best of our knowledge, the tightest known convergence rates for the truncated $n$-step Harutyunyan operator in both policy evaluation and control.

\subsection{Limiting kernel, aperiodicity, and unichains}

We finish this section by defining the limiting kernel and reviewing aperiodicity and unichains.
So, let us fix a policy $\nu\in\set{\Pi}$ and consider the transition operator $\op{P}^\nu$ of the Markov chain induced by $\nu$. 
We define the \emph{limiting kernel (LK)}~$\op{P}_\infty^\nu\colon \set{Q} \mapsto \set{Q}$ to be the corresponding Ces\`{a}ro limit~\cite[App.~A.4]{puterman1994markov}, that is,
\begin{equation*}
	\label{eq:limiting-kernel-generic}
	\op{P}_\infty^\nu \Let \lim_{k\to\infty} \frac{1}{k} \ssum_{j<k} (\op{P}^\nu)^j.
\end{equation*}
We note that the limiting kernel exists for every finite chain. 
Moreover, it is idempotent and absorbs $\op{P}^\nu$ from either side~\cite[\S3.2]{boldi2006graph}, that is, 
\[
\op{P}_\infty^\nu  \op{P}_\infty^\nu = \op{P}_\infty^\nu   \op{P}^\nu = \op{P}^\nu   \op{P}_\infty^\nu = \op{P}_\infty^\nu.
\]
Let us also define the finite-horizon limiting-kernel residual by
\[
\delta^\nu_n \Let \norm{(\op{P}^{\nu})^n - \op{P}_\infty^\nu}_{\infty} \in [0,2], \quad n\in\N_0.
\]
Note that $\delta^\nu_n \leq 2$, independent of $\nu$ and $n$. 
This upper bound, however, can be significantly improved for a certain class of chains, as we discuss next.

The transition operator $\op{P}^\nu$ is called \emph{aperiodic} if every recurrent class of the induced chain is aperiodic.
In this case, the powers of $\op{P}^\nu$ converge as $k\to\infty$~\cite[\S4.3.5]{gallager2013stochastic}, and the limiting kernel is equivalently given by
\begin{equation*}
	\label{eq:limiting-kernel}
	\op{P}_\infty^\nu = \lim_{k\to\infty} (\op{P}^\nu)^k.
\end{equation*}
For an aperiodic chain, the limiting kernel consists of stationary distributions of the chain on the recurrent classes.
That is, $\op{P}_\infty^\nu Q (s, a)$ is a weighted combination of the stationary averages of $Q$ across the reachable recurrent classes from $(s, a)$.
It follows from~\cite[\S4.4.2]{gallager2013stochastic} that for an aperiodic $\op{P}^\nu$, we can define the \emph{mixing rate} $\sigma_{\nu} \in (0,1)$ and \emph{mixing constant} $c_{\nu}<\infty$ such that 
\begin{equation}
	\label{eq:p-infty-convergence}
	\delta^\nu_n \leq c_{\nu} \sigma_{\nu}^n, \quad \forall n\in\N_0.
\end{equation}
We note that $\sigma_{\nu}\in(\sigma_{\nu}^{\star},1)$, where $\sigma_{\nu}^{\star}$ is the largest modulus among the eigenvalues of $\op{P}^\nu$ other than $1$, with $\sigma_{\nu}^{\star} \Let 0$ if no such eigenvalue exists. 
Similarly, we can define the \emph{mixing time} $n_{\nu} \in \N$ of $\op{P}^\nu$ by
\begin{equation}
    \label{eq:tail-improvement-threshold}
    n_\nu \Let \min\{n \in \N \mid \delta^\nu_n < 1\}.
\end{equation}

We call the chain induced by $\nu$ a \emph{unichain} if it has a single recurrent class, possibly together with transient states~\cite[Def.~4.3.2]{gallager2013stochastic}. 
When the induced chain is a unichain, the limiting kernel $\op{P}_\infty^\nu$ reduces to the rank-one operator, given by $\op{P}_\infty^\nu Q \Let \e\,\E_{(s,a)\sim d_\nu}[Q(s,a)]$, where $d_\nu \in \Delta(\set{S}\times\set{A})$ is the unique stationary distribution.
If the unichain is also aperiodic, then $(\op{P}^\nu)^k Q \to \e\,\E_{(s,a)\sim d_\nu}[Q(s,a)]$ as $k\to\infty$ for every $Q\in\set{Q}$.

\section{Tail-completed operators}\label{sec:acc-ops}
In the previous section, we defined the truncated off-policy policy evaluation operator~$\op{R}_{n,\lambda}^{\mu,\pi}$ in~\eqref{eq:off-multi-lambda} for a given target policy~$\pi$ and behavior policy~$\mu$.
In this section, we present \emph{tail-completed} versions of these operators using \emph{limiting-kernel} preconditioners, together with the corresponding convergence guarantees and acceleration over the truncated operators under some conditions.

\subsection{Limiting-kernel preconditioner}\label{sec:spec-pre}

Many on- and off-policy operators build their preconditioner by approximating the PI preconditioner via $n$-step truncation and/or averaging, as we discussed in the previous section.
In contrast, operator splitting (OS) and the corresponding OS-VI algorithm~\cite{rakhsha2022operator} leverage the decomposition of complete \emph{policy evaluation}, i.e., the computation of
$\op{C}_{\infty,1}^{\pi}$, through inversion of ``simpler'' surrogates.
To be precise, given a policy~$\pi$ and an approximate transition operator~$\wh{\op{P}}^\pi$, OS-VI introduces the \emph{Varga operator}~$\op{V}^{\pi}\colon \set{Q} \mapsto \set{Q}$ defined by
\begin{equation}
	\label{eq:varga}
	\op{V}^{\pi} Q \Let
	(\op{I} - \gamma \wh{\op{P}}^{\pi})^{-1}
	\bigl(r + \gamma(\op{P}^{\pi} - \wh{\op{P}}^{\pi}) Q\bigr).
\end{equation}
That is, $\op{P}^\pi$ is replaced by the approximation $\wh{\op{P}}^\pi$ in the preconditioner of $\op{V}^{\pi}$.
Notably, under $\pi=\pi_Q$, the Varga operator reduces to VI, LPI, and PI for
$\wh{\op{P}}^{\pi_Q} = 0$,
$\wh{\op{P}}^{\pi_Q} = \lambda \op{P}^{\pi_Q}$, and
$\wh{\op{P}}^{\pi_Q} = \op{P}^{\pi_Q}$, respectively.

The choice of $\wh{\op{P}}^\pi$ is crucial for the convergence rate of the Varga operator.
However, since the preconditioner involves inverting $(\op{I} - \gamma \wh{\op{P}}^\pi)$ and must be applicable in a model-free setting, the choice of $\wh{\op{P}}^\pi$ must ensure both tractable invertibility and estimability.
The former condition motivated the low-rank approximations of $\op{P}^\pi$ in DDVI~\cite{lee2024deflated}, which approximate $\op{P}^\pi$ via its leading spectral components to invert the preconditioner efficiently.
Ensuring estimability further constrains the choice of approximation.
R1-QL~\cite{kolarijani2025rank} forms a rank-one surrogate from a power-method estimate of a stationary distribution of $\op{P}^\pi$.

We instead approximate $\op{P}^\pi$ by its limiting kernel $\op{P}_\infty^\pi$. 
We show that this approximation satisfies both conditions put forward previously, namely, tractable invertibility and estimability from the samples.
We defer the discussion on estimation to a later section and focus on the resulting operator and its convergence properties below. 

Observe that, since the limiting kernel $\op{P}_\infty^\pi$ is idempotent, the inversion in the preconditioner with $\op{P}_\infty^\pi$ admits the closed form 
\[
	(\op{I} - \gamma \op{P}_\infty^\pi)^{-1} = \op{I} + \frac{\gamma}{1 - \gamma} \op{P}_\infty^\pi.
\]
Then, substituting $\wh{\op{P}}^\pi$ with $\op{P}_\infty^\pi$ in the Varga operator~\eqref{eq:varga} yields the \emph{limiting-kernel Varga operator} $\wt{\op{V}}^{\pi}\colon \set{Q} \mapsto \set{Q}$ given by
\begin{align*}
	\wt{\op{V}}^{\pi} Q & \Let
	Q + \Bigl(\op{I} + \frac{\gamma}{1 - \gamma} \op{P}_\infty^\pi\Bigr)(\op{T}^{\pi} Q - Q).
\end{align*}
As we discussed above, when the chain induced by $\pi$ is a unichain, the limiting kernel reduces to a rank-one operator. 
With $\pi=\pi_Q$, the corresponding rank-one LK Varga operator is thus the same as the R1-VI operator in~\cite{kolarijani2025rank}.
We note that the R1-VI algorithm in~\cite{kolarijani2025rank} approximates the stationary distribution of the current policy-induced transition matrix by a single step of the power method in each iteration and is guaranteed to converge with at least the same geometric rate $\gamma$ as VI.

The limiting-kernel approximation enjoys two advantages.
First, the term $(\op{P}_\infty^\pi \op{F})Q$ can be estimated from rollouts provided that $\op{F}Q$ itself admits a sample-based estimator.
We elaborate on this in Section~\ref{sec:val-learn}.
Second, it is compatible with $n$-step truncations.
Given a policy $\pi$, the infinite sum in the preconditioner~$\op{C}^{\pi}_{\infty, \lambda}$ can be split into two parts as follows
\begin{align*}
	\op{C}^{\pi}_{\infty, \lambda} & = \ssum_{k=0}^{n-2} (\lambda \gamma \op{P}^\pi)^k + (\lambda \gamma \op{P}^\pi)^{n-1} (\op{I} - \lambda \gamma \op{P}^\pi)^{-1}.
\end{align*}
Approximating $(\op{I} - \lambda \gamma \op{P}^\pi)^{-1}$ by $(\op{I} - \lambda \gamma \op{P}_\infty^\pi)^{-1}$, we derive the \emph{limiting-kernel (LK) preconditioner}~$\wt{\op{C}}_{n,\lambda}^{\pi}\colon \set{Q} \mapsto \set{Q}$ given by
\begin{align}\label{eq:LK-precondition}
	\wt{\op{C}}^{\pi}_{n, \lambda} & \Let \op{C}^{\pi}_{n, \lambda} + \frac{(\lambda\gamma)^n}{1 - \lambda\gamma} \op{P}_\infty^\pi, \quad n\in\N,\, \lambda\in(0,1].
\end{align}
Observe that the effect of the limiting-kernel modification diminishes as $n \to \infty$. 
Indeed, $\wt{\op{C}}^{\pi}_{\infty, \lambda} = \op{C}^{\pi}_{\infty, \lambda}$ for all $\lambda \in (0,1]$.
This modification can be interpreted as \emph{tail completion}, since the powers of $\op{P}^{\pi}$ in the tail omitted by $\op{C}^{\pi}_{n, \lambda}$ are approximated by $\op{P}_\infty^\pi$ in $\wt{\op{C}}^{\pi}_{n, \lambda}$, that is, 
\begin{align}\label{eq:LK-precondition-alternative}
	\wt{\op{C}}^{\pi}_{n, \lambda} = \ssum_{k=0}^{n-1} (\lambda \gamma \op{P}^\pi)^k + \ssum_{l \geq  n} (\lambda \gamma \op{P}_\infty^\pi)^l.
\end{align}

If the policy $\pi$ is such that $\op{P}^\pi$ is aperiodic, then $(\op{P}^\pi)^{n}$ approaches $\op{P}_\infty^\pi$ as $n$ increases,  
which makes the limiting-kernel approximation in~\eqref{eq:LK-precondition} increasingly accurate for larger $n$.
As a result, if $n$ is large enough, the LK preconditioner $\wt{\op{C}}^{\pi}_{n, \lambda}$ is strictly closer to $\op{C}^{\pi}_{\infty, \lambda}$ than the truncated preconditioner~$\op{C}^{\pi}_{n, \lambda}$ in terms of operator $\infty$-norm.
The following lemma formalizes this statement.

\begin{Lem}[Preconditioner tail completion]
	\label{lem:lk-aperiodicity}
	Let $\pi\in\set{\Pi}$ and suppose that $\op{P}^\pi$ is aperiodic with mixing rate $\sigma_{\pi} \in (0,1)$ and mixing time $n_{\pi} \in\N$. 
	Define $\delta_n \Let \|\op{C}^{\pi}_{n, \lambda} - \op{C}^{\pi}_{\infty, \lambda}\|_{\infty}$ and $\wt{\delta}_n \Let \|\wt{\op{C}}^{\pi}_{n, \lambda} - \op{C}^{\pi}_{\infty, \lambda}\|_{\infty}$.
	Then, $\wt{\delta}_n / \delta_n = \mathcal{O}(\sigma_{\pi}^n)$.
	In particular, $\wt{\delta}_n < \delta_n$ for all $n \geq n_\pi$.
\end{Lem}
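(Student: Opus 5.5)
The plan is to compute $\delta_n$ exactly, bound $\wt{\delta}_n$ term by term through the limiting-kernel residuals $\delta^\pi_k$, and compare the two.

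\emph{Step 1: computing $\delta_n$.} Write the tail explicitly:
\[
\op{C}^{\pi}_{\infty,\lambda} - \op{C}^{\pi}_{n,\lambda} = \ssum_{k\geq n} (\lambda\gamma\op{P}^\pi)^k .
\]
Every power $(\op{P}^\pi)^k$ is a row-stochastic nonnegative matrix. Hence this difference is entrywise nonnegative, and each of its row sums equals $\sum_{k\geq n}(\lambda\gamma)^k$. This gives exactly
\[
\delta_n = \frac{(\lambda\gamma)^n}{1-\lambda\gamma}.
\]

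\emph{Step 2: bounding $\wt{\delta}_n$.} Use the tail-completion form \eqref{eq:LK-precondition-alternative}. Since $\op{P}_\infty^\pi$ is idempotent, $(\op{P}^\pi_\infty)^l=\op{P}^\pi_\infty$ for $l\geq1$, so
\[
\wt{\op{C}}^{\pi}_{n,\lambda} - \op{C}^{\pi}_{\infty,\lambda} = \ssum_{k\geq n} (\lambda\gamma)^k\bigl(\op{P}_\infty^\pi - (\op{P}^\pi)^k\bigr).
\]
By the triangle inequality, $\wt{\delta}_n \leq \sum_{k\geq n}(\lambda\gamma)^k \delta^\pi_k$. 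Applying the mixing bound \eqref{eq:p-infty-convergence}, $\delta^\pi_k\leq c_\pi\sigma_\pi^k$, yields
\[
\wt{\delta}_n \leq \frac{c_\pi(\lambda\gamma\sigma_\pi)^n}{1-\lambda\gamma\sigma_\pi}.
\]
Dividing by the exact value of $\delta_n$ gives
\[
\frac{\wt{\delta}_n}{\delta_n}\leq \frac{c_\pi(1-\lambda\gamma)}{1-\lambda\gamma\sigma_\pi}\,\sigma_\pi^n = \mathcal{O}(\sigma_\pi^n).
\]

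\emph{Step 3: strict improvement for $n\geq n_\pi$.} The $\mathcal{O}$ bound alone only gives $\wt{\delta}_n<\delta_n$ for large $n$ with an unspecified constant, so a sharper argument is needed to reach the explicit threshold $n_\pi$. The plan is to show that $k\mapsto\delta^\pi_k$ is nonincreasing. Because $\op{P}^\pi$ absorbs the limiting kernel, $\op{P}^\pi\op{P}^\pi_\infty=\op{P}^\pi_\infty$, we can write
\[
(\op{P}^\pi)^{k+1}-\op{P}^\pi_\infty=\op{P}^\pi\bigl((\op{P}^\pi)^k-\op{P}^\pi_\infty\bigr).
\]
Since $\|\op{P}^\pi\|_\infty=1$, this gives $\delta^\pi_{k+1}\leq\delta^\pi_k$. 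Then, for $n\geq n_\pi$,
\[
\wt{\delta}_n\leq \delta^\pi_n\sum_{k\geq n}(\lambda\gamma)^k=\delta^\pi_n\,\delta_n .
\]
Monotonicity gives $\delta^\pi_n\leq\delta^\pi_{n_\pi}<1$ by the definition of the mixing time in \eqref{eq:tail-improvement-threshold}. Together with $\delta_n>0$, this yields $\wt{\delta}_n<\delta_n$.

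The only delicate point is this monotonicity step, which is what lets the threshold be $n_\pi$ rather than merely ``$n$ sufficiently large.'' Once the absorption identity is invoked, the rest is routine geometric-series bookkeeping.
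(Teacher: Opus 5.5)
Your proposal is correct and follows the paper's proof essentially step for step. Like the paper, you compute $\delta_n = (\gamma\lambda)^n/(1-\gamma\lambda)$ exactly via nonnegativity and row sums, bound $\wt{\delta}_n$ by $\sum_{k\geq n}(\gamma\lambda)^k\delta^\pi_k$ with the triangle inequality and the mixing bound, and get the explicit threshold $n_\pi$ from monotonicity of $\delta^\pi_k$ via the absorption identity. The only cosmetic differences are that you factor $\op{P}^\pi$ on the left (the paper writes $\Delta^\pi_{k+1} = \Delta^\pi_k\op{P}^\pi$ with $\Delta^\pi_k \Let (\op{P}^\pi)^k - \op{P}^\pi_\infty$) and that you state the last step as the slightly cleaner bound $\wt{\delta}_n \leq \delta^\pi_n\,\delta_n$.
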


\subsection{Off-policy operators}\label{sec:off-policy-ops}

Using the LK preconditioner introduced in~\eqref{eq:LK-precondition}, we now define the
corresponding \emph{limiting-kernel (LK) Harutyunyan operator} $\wt{\op{R}}^{\mu,\pi}_{n,\lambda}\colon \set{Q} \mapsto \set{Q}$ for a behavior policy~$\mu$ and a target policy~$\pi$ by
\begin{align*}
	\label{eq:lk-on-n-lambda}
	\wt{\op{R}}_{n,\lambda}^{\mu,\pi} Q
	 & \Let Q + \Bigl(\op{C}^{\mu}_{n, \lambda} + \frac{(\lambda\gamma)^n}{1 - \lambda\gamma} \op{P}_\infty^\mu\Bigr)
	(\op{T}^\pi Q - Q),
	\quad \mu,\pi\in\set{\Pi},\ n\in\N,\ \lambda \in (0,1].
\end{align*}
This is the extension of the $n$-step Harutyunyan operator~$\op{R}_{n,\lambda}^{\mu,\pi}$ with LK preconditioning.
For the rest of this section, we omit the on-policy operator and its policy evaluation analysis, as the off-policy operator reduces to the on-policy case when $\mu = \pi$ (i.e., $\epsilon = 0$).

Intuitively, the $n$-step Harutyunyan operator in HI, through $\op{C}^\mu_{n,\lambda}$, propagates the next $n-1$ Bellman errors through the Markov chain induced by the behavior policy $\mu$.
The LK preconditioner, in turn, performs tail completion by approximating the omitted tail with the limiting kernel from step $n$ onward, propagating the Bellman error according to the stationary structure of the recurrent classes reachable from each state-action pair.
Figure~\ref{fig:backup} illustrates the corresponding backup diagram of the LK Harutyunyan operator, depicting the resulting target value estimator.

\input{figure/lk_operator_diagram.tex}

We now introduce the corresponding iterations and analyze their convergence properties. 
To that end, let us define the LK iterations
\begin{equation}\label{eq:LKH-iterations}
	\begin{array}{rlr}
		\LKHIpi : & Q_{\ell+1} = \wt{\op{R}}_{n,\lambda}^{\mu,\pi} Q_\ell, \quad \ell\in \N_0,\; Q_0 \in \set{Q};            & \text{(policy evaluation)} \\[2ex]
		\LKHI:    & Q_{\ell+1} = \wt{\op{R}}_{n,\lambda}^{\mu,\pi_{Q_{\ell}}} Q_\ell, \quad \ell\in \N_0,\; Q_0 \in \set{Q}. & \text{(control)}
	\end{array}
\end{equation}

Our first result concerns the convergence of LKHI($\pi$) for policy evaluation under a generic behavior policy $\mu$.
Recall that $\delta^\mu_n$ denotes the $n$-step limiting-kernel residual for the policy~$\mu$, and $\alpha_{\mathrm{e}}$ is the convergence rate of HI($\pi$) established in Lemma~\ref{lem:convergence-h}.
\begin{Thm}[Convergence of LKHI($\pi$)]
	\label{thm:conv-lkhi-pi}
	Consider the iterates of \emph{LKHI($\pi$)} in~\eqref{eq:LKH-iterations}. 
    We have $\norm{Q_\ell - Q^\pi}_\infty \leq \wt\alpha_{\mathrm{e}}^\ell \norm{Q_0 - Q^\pi}_\infty$ for all $\ell \in \N_0$, where $\wt\alpha_{\mathrm{e}} \Let \eta_{\epsilon} + (\gamma\lambda)^n  \delta^\mu_n$.
    In particular, $\wt\alpha_{\mathrm{e}} < 1$ and hence $Q_\ell \to Q^\pi$ if
            \begin{equation*}
            \label{eq:epsilon-LKHI-pi}
            \epsilon < \wt\epsilon_{\max} \Let \frac{1-\gamma}{\gamma\lambda} - \frac{1-\gamma\lambda}{\gamma\lambda} (\gamma\lambda)^n\delta^\mu_n.
            \end{equation*}
    Moreover, we have $\wt\alpha_{\mathrm{e}} < \alpha_{\mathrm{e}}$ if and only if $\delta^\mu_n < \frac{1-\gamma-\epsilon}{1-\gamma\lambda}$.
\end{Thm}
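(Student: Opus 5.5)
The plan is to work with the error $E_\ell \Let Q_\ell - Q^\pi$ and write one step of LKHI($\pi$) as a linear map $E_{\ell+1} = \op{M} E_\ell$. The aim is to split $\op{M}$ into a part controlled by $\eta_\epsilon$ and a part controlled by $(\gamma\lambda)^n\delta^\mu_n$. Since $\op{T}^\pi Q^\pi = Q^\pi$, we have $\op{T}^\pi Q - Q = (\gamma\op{P}^\pi - \op{I})E$. Hence $\op{M} = \op{I} + \wt{\op{C}}^{\mu}_{n,\lambda}(\gamma\op{P}^\pi - \op{I})$.

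The key algebraic step is to rewrite $\gamma\op{P}^\pi - \op{I} = (\gamma\op{P}^\pi - \lambda\gamma\op{P}^\mu) - (\op{I} - \lambda\gamma\op{P}^\mu)$ and apply two telescoping identities:
\[
\op{C}^\mu_{n,\lambda}(\op{I}-\lambda\gamma\op{P}^\mu) = \op{I} - (\lambda\gamma\op{P}^\mu)^n,
\qquad
\op{P}^\mu_\infty(\op{I}-\lambda\gamma\op{P}^\mu) = (1-\lambda\gamma)\op{P}^\mu_\infty .
\]
The second identity uses the absorption property $\op{P}_\infty^\mu\op{P}^\mu = \op{P}_\infty^\mu$. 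Substituting both into $\op{M}$, the identity cancels and I expect to obtain
\[
\op{M} = \wt{\op{C}}^{\mu}_{n,\lambda}\,\gamma(\op{P}^\pi - \lambda\op{P}^\mu) + (\lambda\gamma)^n\bigl((\op{P}^\mu)^n - \op{P}^\mu_\infty\bigr).
\]
Finding this decomposition is the main obstacle. A naive split around $\op{C}^\mu_{\infty,\lambda}$ leaves a factor $(1+\eta_\epsilon)$ on the residual, which gives a rate that is too loose. Pairing the tail term with $(\op{I}-\lambda\gamma\op{P}^\mu)$ instead produces the clean residual $(\lambda\gamma)^n\delta^\mu_n$.

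Next I bound each piece in the operator $\infty$-norm. The matrices $\op{P}^\mu$ and $\op{P}^\mu_\infty$ are row-stochastic and all coefficients are nonnegative, so
\[
\|\wt{\op{C}}^{\mu}_{n,\lambda}\|_\infty \le \frac{1-(\lambda\gamma)^n}{1-\lambda\gamma} + \frac{(\lambda\gamma)^n}{1-\lambda\gamma} = \frac{1}{1-\lambda\gamma}.
\]
For the other factor, row $(s,a)$ of $\op{P}^\pi - \lambda\op{P}^\mu$ has absolute sum $\sum_{s'}\prob(s'\vert s,a)\sum_{a'}|\pi(a'\vert s')-\lambda\mu(a'\vert s')|$. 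The inner sum is at most $\lambda\|\pi(\cdot\vert s')-\mu(\cdot\vert s')\|_1 + (1-\lambda) \le 1-\lambda+\lambda\epsilon$. The first term of $\op{M}$ therefore has norm at most $\gamma(1+\lambda(\epsilon-1))/(1-\lambda\gamma) = \eta_\epsilon$. The second term has norm exactly $(\lambda\gamma)^n\delta^\mu_n$ by definition. This gives $\|\op{M}\|_\infty \le \wt\alpha_{\mathrm e}$, and iterating yields the stated geometric bound.

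The remaining claims are algebra. The condition $\wt\alpha_{\mathrm e}<1$ is linear in $\epsilon$. Solving $\gamma(1-\lambda+\lambda\epsilon) < (1-\gamma\lambda)\bigl(1-(\gamma\lambda)^n\delta^\mu_n\bigr)$ for $\epsilon$ gives exactly $\wt\epsilon_{\max}$. For the comparison with Lemma~\ref{lem:convergence-h}, note that $\alpha_{\mathrm e}-\eta_\epsilon = (\gamma\lambda)^{n-1}(\gamma-\eta_\epsilon)$. A short computation gives $\gamma-\eta_\epsilon = \gamma\lambda(1-\gamma-\epsilon)/(1-\gamma\lambda)$. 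Hence $\wt\alpha_{\mathrm e}<\alpha_{\mathrm e}$ is equivalent to $(\gamma\lambda)^n\delta^\mu_n < (\gamma\lambda)^n(1-\gamma-\epsilon)/(1-\gamma\lambda)$. Dividing by $(\gamma\lambda)^n>0$ yields the stated equivalence.
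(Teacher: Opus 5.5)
Your proposal is correct and follows essentially the paper's proof: your split $\gamma\op{P}^\pi - \op{I} = \gamma(\op{P}^\pi - \lambda\op{P}^\mu) - (\op{I}-\lambda\gamma\op{P}^\mu)$ is exactly the paper's $\op{W}+\op{X}$ decomposition, and the telescoping and absorption steps give the same error operator $(\gamma\lambda)^n\bigl((\op{P}^\mu)^n - \op{P}^\mu_\infty\bigr) + \wt{\op{C}}^\mu_{n,\lambda}\op{W}$. The only cosmetic differences are that you bound $\|\wt{\op{C}}^\mu_{n,\lambda}\op{W}\|_\infty$ as a single product and compute $\|\op{W}\|_\infty$ row by row instead of by the triangle inequality, which yields the same bound $(1-\gamma\lambda)\eta_\epsilon$.
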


The preceding theorem characterizes the convergence rate $\wt\alpha_{\mathrm{e}}$ and the upper bound $\wt\epsilon_{\max}$ on the policy discrepancy for LKHI($\pi$).
Unlike HI($\pi$), whose convergence rate $\alpha_{\mathrm{e}}$ can be smaller or larger than $\eta_{\epsilon}$, LKHI($\pi$) has the same or a slower convergence rate than the non-truncated operator, as $\wt\alpha_{\mathrm{e}} \geq \eta_{\epsilon}$.
Similarly, it admits a range of policy discrepancies that is no wider than that of the non-truncated operator, as $\wt\epsilon_{\max} \leq \frac{1-\gamma}{\gamma\lambda}$.
This is expected since tail completion with the limiting kernel $\op{P}_\infty^\mu$ in the LK preconditioner $\wt{\op{C}}^{\mu}_{n, \lambda}$ \emph{approximates} the non-truncated preconditioner $\op{C}^{\mu}_{\infty, \lambda}$.
Compared to HI($\pi$), which uses the truncated operator $\op{C}^{\mu}_{n,\lambda}$, LKHI($\pi$) can achieve a strictly smaller convergence rate $\wt\alpha_{\mathrm{e}} < \alpha_{\mathrm{e}}$ only in the regime $\epsilon \in [0,1-\gamma)$, provided that the residual $\delta^\mu_n$ is sufficiently small.
Recall that, for policy evaluation, the non-truncated preconditioner $\op{C}^{\mu}_{\infty, \lambda}$ yields a strictly smaller convergence rate than the truncated counterpart $\op{C}^{\mu}_{n, \lambda}$ in the same regime for $\epsilon$.
See Lemma~\ref{lem:convergence-h} and the discussion following the lemma.
Correspondingly, the LK preconditioner $\wt{\op{C}}^{\mu}_{n, \lambda}$, as an approximation of $\op{C}^{\mu}_{\infty, \lambda}$, can only inherit this improvement over the same regime. 

Note that the condition on the residual $\delta^\mu_n$ ensuring $\wt\alpha_{\mathrm{e}} < \alpha_{\mathrm{e}}$ may not hold for any $n\in\N$ for a generic behavior policy $\mu$.
However, for an \emph{aperiodic} behavior transition operator $\op{P}^\mu$, this is always the case for large enough $n$, given $\epsilon < 1-\gamma$. 
Indeed, by Lemma~\ref{lem:lk-aperiodicity}, for a large enough $n$, $\wt{\op{C}}^{\mu}_{n, \lambda}$ is a strictly better approximation of $\op{C}^{\mu}_{\infty, \lambda}$ than $\op{C}^{\mu}_{n, \lambda}$.
The following corollary characterizes this.

\begin{Cor}[Convergence of LKHI($\pi$) with aperiodicity]
	\label{cor:conv-lkhi-pi}
	Consider the setup of Theorem~\ref{thm:conv-lkhi-pi}. 
    Assume that $\op{P}^\mu$ is aperiodic with mixing rate $\sigma_{\mu} \in (0,1)$ and mixing constant $c_{\mu} > 0$. 
    Also, assume that $\epsilon < 1-\gamma$. 
    Then, $\wt\alpha_{\mathrm{e}} < \alpha_{\mathrm{e}}$ if $n > \log\big(c_\mu(1-\gamma\lambda)(1-\gamma-\epsilon)^{-1}\big)/\log(\sigma_\mu^{-1})$.
\end{Cor}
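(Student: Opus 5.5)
The corollary follows by combining the sufficient condition in Theorem~\ref{thm:conv-lkhi-pi} with the geometric mixing bound~\eqref{eq:p-infty-convergence}. Theorem~\ref{thm:conv-lkhi-pi} states that $\wt\alpha_{\mathrm{e}} < \alpha_{\mathrm{e}}$ if and only if $\delta^\mu_n < \frac{1-\gamma-\epsilon}{1-\gamma\lambda}$. It therefore suffices to show that the stated lower bound on $n$ forces this inequality on the residual.

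First, since $\epsilon < 1-\gamma$, the threshold $\tau \Let \frac{1-\gamma-\epsilon}{1-\gamma\lambda}$ is strictly positive. Hence the logarithm $\log\big(c_\mu(1-\gamma\lambda)(1-\gamma-\epsilon)^{-1}\big) = \log(c_\mu/\tau)$ is well defined. Second, since $\op{P}^\mu$ is aperiodic, \eqref{eq:p-infty-convergence} gives $\delta^\mu_n \le c_\mu \sigma_\mu^n$ for all $n\in\N_0$. It is then enough to ensure $c_\mu\sigma_\mu^n < \tau$, i.e., $\sigma_\mu^n < \tau/c_\mu$.

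Taking logarithms and using $\log(\sigma_\mu^{-1}) > 0$ (since $\sigma_\mu\in(0,1)$), this inequality is equivalent to $n\log(\sigma_\mu^{-1}) > \log(c_\mu/\tau)$. That is exactly $n > \log\big(c_\mu(1-\gamma\lambda)(1-\gamma-\epsilon)^{-1}\big)/\log(\sigma_\mu^{-1})$. The chain of implications $n > \cdots \Rightarrow c_\mu\sigma_\mu^n<\tau \Rightarrow \delta^\mu_n<\tau \Rightarrow \wt\alpha_{\mathrm{e}}<\alpha_{\mathrm{e}}$ completes the argument.

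There is no real obstacle here. The only points needing care are the sign of $\log(\sigma_\mu^{-1})$ when dividing, and positivity of $\tau$, which is where the assumption $\epsilon<1-\gamma$ enters. If $c_\mu<\tau$, the bound is negative and holds for every $n\in\N$, which remains consistent with the claim.
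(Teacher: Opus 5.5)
Your proposal is correct and follows essentially the same route as the paper: invoke the ``if and only if'' condition $\delta^\mu_n < \frac{1-\gamma-\epsilon}{1-\gamma\lambda}$ from Theorem~\ref{thm:conv-lkhi-pi}, bound $\delta^\mu_n \le c_\mu\sigma_\mu^n$ via aperiodicity, and take logarithms using $\log(\sigma_\mu^{-1})>0$. You also note explicitly that $\epsilon<1-\gamma$ makes the logarithm well defined, which the paper leaves implicit; consider renaming your threshold, though, since $\tau$ already denotes the LK-term discount factor in the paper.
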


The preceding result indicates that the greatest improvement is for $\epsilon = 0$, that is, for the on-policy setting. The following remark discusses this in more detail.

\begin{Rem}[Convergence of on-policy LKHI($\pi$) with aperiodicity]
    \label{rem:conv-lkhi-pi}
    Under the assumption of aperiodicity, the greatest improvement in the convergence rate of LKHI($\pi$) over \emph{HI($\pi$)} is achieved in the on-policy setting where $\mu = \pi$ with $\epsilon = 0$.
    In particular, for $\epsilon = 0$ and $\lambda = 1$, \emph{LKHI($\pi$)} has a strictly smaller convergence rate than \emph{HI($\pi$)} for every $n \geq n_\pi$, with $n_\pi$ being the mixing time of $\op{P}^{\pi}$.
	This is expected in the on-policy setting, where tail completion approximates full policy evaluation.
\end{Rem}

Next, we analyze the convergence of LKHI in control.
\begin{Thm}[Convergence of LKHI]
	\label{thm:convergence-control}
	Consider the iterates of \emph{LKHI} in~\eqref{eq:LKH-iterations}.
    We have
    \[
    \norm{Q_\ell - Q^\star}_\infty \leq \wt\beta_{\mathrm{c}}^\ell \norm{Q_0 - Q^\star}_\infty
    \]
    for all $\ell \in \N_0$, where $\wt\beta_{\mathrm{c}} \Let \eta_2$.
	In particular, $\wt\beta_{\mathrm{c}} < 1$ and hence $Q_\ell \to Q^\star$ if $\lambda < \frac{1-\gamma}{2\gamma}$.
\end{Thm}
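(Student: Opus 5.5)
The plan is to avoid any explicit dependence on the limiting-kernel residual $\delta^\mu_n$. Instead, we use only that every operator appearing in the LK preconditioner $\wt{\op{C}}^{\mu}_{n,\lambda}$ is entrywise nonnegative with unit row sums. Fix an iterate $Q_\ell$, write $\pi \Let \pi_{Q_\ell}$ and $e_\ell \Let Q_\ell - Q^\star$. Recall that $\op{T}^{\pi_{Q_\ell}} Q_\ell = \op{T} Q_\ell$ and $\op{T} Q^\star = Q^\star$. From the definition of $\wt{\op{R}}_{n,\lambda}^{\mu,\pi}$ we then obtain the error recursion
\begin{equation*}
Q_{\ell+1} - Q^\star = \bigl(\op{I} - \wt{\op{C}}^{\mu}_{n,\lambda}\bigr)\, e_\ell + \wt{\op{C}}^{\mu}_{n,\lambda}\bigl(\op{T} Q_\ell - \op{T} Q^\star\bigr).
\end{equation*}
The key point is that the preconditioner $\wt{\op{C}}^{\mu}_{n,\lambda}$ depends only on the fixed behavior policy $\mu$ and not on the greedy policy. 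The control structure therefore enters only through $\op{T}$, which is a $\gamma$-contraction in $\norm{\cdot}_\infty$.

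Next, we bound the two operator norms. By~\eqref{eq:LK-precondition},
\begin{equation*}
\op{I} - \wt{\op{C}}^{\mu}_{n,\lambda} = -\Bigl(\ssum_{k=1}^{n-1} (\lambda\gamma\op{P}^\mu)^k + \frac{(\lambda\gamma)^n}{1-\lambda\gamma}\op{P}^\mu_\infty\Bigr).
\end{equation*}
Both $(\op{P}^\mu)^k$ and $\op{P}^\mu_\infty$ are row-stochastic; for the latter, this holds because it is a Ces\`aro limit of stochastic matrices. Hence the bracket is nonnegative, and its $\infty$-norm equals its row sum
\begin{equation*}
\ssum_{k=1}^{n-1}(\lambda\gamma)^k + \frac{(\lambda\gamma)^n}{1-\lambda\gamma} = \frac{\lambda\gamma}{1-\lambda\gamma}.
\end{equation*}
Similarly, $\wt{\op{C}}^{\mu}_{n,\lambda}$ is nonnegative with row sums $\ssum_{k=0}^{n-1}(\lambda\gamma)^k + \frac{(\lambda\gamma)^n}{1-\lambda\gamma} = \frac{1}{1-\lambda\gamma}$. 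Neither norm depends on $n$. This is exactly the effect of tail completion: the preconditioner carries the same total mass as $\op{C}^{\mu}_{\infty,\lambda}$, which is why the rate coincides with that of the non-truncated operator.

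Combining the two bounds with the triangle inequality and the contraction property of $\op{T}$ gives
\begin{equation*}
\norm{e_{\ell+1}}_\infty \le \frac{\lambda\gamma}{1-\lambda\gamma}\norm{e_\ell}_\infty + \frac{\gamma}{1-\lambda\gamma}\norm{e_\ell}_\infty = \frac{\gamma(1+\lambda)}{1-\lambda\gamma}\norm{e_\ell}_\infty = \eta_2\norm{e_\ell}_\infty .
\end{equation*}
Induction on $\ell$ then yields the geometric bound with $\wt\beta_{\mathrm{c}} = \eta_2$. Finally, $\eta_2 < 1$ is equivalent to $\gamma + \gamma\lambda < 1 - \lambda\gamma$, that is, $\lambda < \frac{1-\gamma}{2\gamma}$.

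The only step requiring care is confirming that $\op{P}^\mu_\infty$ is entrywise nonnegative with unit row sums. This is what lets the sign-definite decomposition go through without invoking aperiodicity, and it follows directly from its definition as a Ces\`aro average of powers of the stochastic matrix $\op{P}^\mu$. Unlike the truncated case in Lemma~\ref{lem:convergence-h}, we cannot exploit cancellation between the two terms to improve on $\eta_2$. We therefore do not attempt a sharper bound; the triangle-inequality estimate already matches the claimed rate.
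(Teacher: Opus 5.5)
Your proposal is correct and follows the paper's proof essentially step for step. Both arguments use the same error decomposition $(\op{I}-\wt{\op{C}}^{\mu}_{n,\lambda})e_\ell + \wt{\op{C}}^{\mu}_{n,\lambda}(\op{T}Q_\ell-\op{T}Q^\star)$, the same row-sum computation $\|\wt{\op{C}}^{\mu}_{n,\lambda}\|_\infty = 1/(1-\gamma\lambda)$ and $\|\wt{\op{C}}^{\mu}_{n,\lambda}-\op{I}\|_\infty = \gamma\lambda/(1-\gamma\lambda)$ (from nonnegativity and $\op{P}_\infty^\mu\e=\e$), and the $\gamma$-contraction of $\op{T}$ to reach $\eta_2$.
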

The preceding theorem shows that LKHI has the same convergence rate and converges under the same conditions as the non-truncated Harutyunyan iteration.
In contrast to policy evaluation, the preceding bound does not establish a rate improvement for LKHI over HI.
This stems from the discontinuous greedy map $Q \mapsto \pi_Q$, under which small changes in $Q$ can perturb $\pi_Q$ arbitrarily, leaving the policy discrepancy $\epsilon$ unconstrained across iterations.
The analysis is therefore forced to accommodate the worst-case $\epsilon = 2$, at which the LK and non-truncated preconditioners share the same operator norm and yield the common rate $\eta_2$.

Beyond the convergence properties established above, the applicability of LKHI hinges on whether it admits a tractable sample-based estimator from rollouts of the behavior chain.
In the following subsection, we construct such an estimator under a fixed behavior policy $\mu$ and establish almost-sure convergence of the resulting learning algorithm to $Q^\star$.

\subsection{Value estimation with a double backup}\label{sec:val-learn}

In the previous subsection, we showed that LKHI converges to optimal Q-values with the same rate as the non-truncated Harutyunyan iteration (that is, HQL) in control.
Here, we develop the corresponding learning algorithm, which we call \emph{Limiting-Kernel Q($\bm\lambda$)} (LKQL).
Our sample-based realization of the LKQL update depends on estimating an approximation of the limiting kernel $\op{P}_\infty^\mu$ of the behavior policy $\mu\in\set{\Pi}$. 
To be precise, for $\tau\in[0,1)$, we consider the approximation $\op{A}_\tau^\mu$ of $\op{P}_\infty^\mu$ defined by
\begin{equation*}
	\label{eq:approx-p-infty}
	\op{A}_\tau^\mu \Let (1 - \tau)(\op{I} - \tau \op{P}^\mu)^{-1}.
\end{equation*}
Observe that $\lim_{\tau \to 1} \op{A}_\tau^\mu = \op{P}_\infty^\mu$. 
Equipped with $\op{A}_\tau^\mu$, the estimation of the LK term reduces to a standard TD problem along a sampled rollout following policy $\mu$, which we develop next.

To this end, we introduce an auxiliary function $U\in \set{Q}$ tracking the long-run average TD error and defined by the fixed-point equation
\begin{equation}
	\label{eq:u-bellman}
	U= (1 - \tau)(\op{T} Q - Q) + \tau \op{P}^\mu U.
\end{equation}
Notice that the preceding equation is analogous to a Bellman equation with ``reward'' $(1 - \tau)(\op{T} Q - Q)$ and ``discount'' $\tau$ under the transition operator $\op{P}^\mu$.
As $\tau \to 1$, $U$ provides a more accurate approximation of $\op{P}_\infty^\mu (\op{T} Q - Q)$, matching the additional term introduced within the LK operator.
A crucial observation is that the fixed point of~\eqref{eq:u-bellman} is unique and can be estimated by standard TD methods under the same sampling conditions as the original value estimation problem.

Let $\rho \in \Delta(\set{S} \times \set{A})$ be a state-action reset (i.e., initial) distribution. 
At each iteration $k\in\N_0$, given a sample rollout $(s_{k,0}, a_{k,0}, r_{k,0}, \ldots, s_{k,n}, a_{k,n})$ with $(s_{k,0}, a_{k,0}) \sim \rho$ and $a_{k,i} \sim \mu(\cdot \vert s_{k,i})$ for $i \in \{1, \ldots, n\}$, we define the TD errors by
\begin{equation}
	\label{eq:td-error-q}
	\delta^{Q}_{k,j} \Let r_{k,j} + \gamma\,V(s_{k,j+1}) - Q(s_{k,j}, a_{k,j}), \quad j\in\{0,1,\ldots,n-1\},
\end{equation}
where $V(s) \Let \E_{a \sim \pi_Q}[Q(s, a)]$. 
The equation in~\eqref{eq:u-bellman} suggests the following TD error for $U$
\begin{equation}
	\label{eq:u-td-error}
	\delta^{U}_{k,j} \Let (1 - \tau)\,\delta^{Q}_{k,j} + \tau\,U(s_{k,j+1}, a_{k,j+1}) - U(s_{k,j}, a_{k,j}), \quad j\in\{0,1,\ldots,n-1\}.
\end{equation}
Intuitively, when $U$ updates on a faster timescale than $Q$, it remains approximately equilibrated to its $Q$-dependent fixed point throughout learning~\cite{perkins2012asynchronous}.
Furthermore, the TD estimate of $U$ admits a natural $n$-step and $\lambda$-return extension.
However, we defer this extension to the empirical studies in Section~\ref{sec:eval-continuous} and focus on the one-step estimator in our analysis.

For reset distribution $\rho$, behavior policy $\mu$, step sizes $\alpha_k,\beta_k>0$, and TD errors $\{\delta^{Q}_{k,j}, \delta^{U}_{k,j}\}_{j=0}^{n-1}$ defined along a sampled rollout by~\eqref{eq:td-error-q} and~\eqref{eq:u-td-error}, respectively, at each iteration $k\in\N_0$, we define the LKQL update at $(s, a) = (s_{k,0}, a_{k,0})$ by
\begin{equation}
	\label{eq:lkql-update}
    \textbf{LKQL}: \left\{
	\begin{array}{l}
		  Q_{k+1}(s, a) = Q_k(s, a) + \alpha_k \big(\ssum_{j=0}^{n-1} (\gamma\lambda)^j \delta^{Q}_{k,j} + \frac{(\gamma\lambda)^n}{1 - \gamma\lambda}\, U_k(s, a)\big), \\[4pt]
	    U_{k+1}(s, a) = U_k(s, a) + \beta_k\, \delta^{U}_{k,0}.
	\end{array}\right.
\end{equation}
Despite the spectral structure of the LK preconditioner introduced in Section~\ref{sec:spec-pre}, the update rule in~\eqref{eq:lkql-update} updates $Q$ at a single state-action pair per step, akin to Q-learning.
This contrasts with prior value-learning schemes that estimate the full preconditioner~\cite{devraj2017zap} or a low-rank approximation~\cite{lee2024deflated, kolarijani2025rank}, both of which update $Q$ at \emph{all} state-action pairs simultaneously.
As a result, LKQL readily extends to continuous state-action spaces, which we study in Section~\ref{sec:eval-continuous}.

While Theorem~\ref{thm:convergence-control} establishes operator-level convergence using the exact limiting kernel $\op{P}_\infty^\mu$, the LKQL update~\eqref{eq:lkql-update} replaces that with the auxiliary iterate $U$, estimated via TD learning with discount factor $\tau$.
It then remains to verify that this approximation and the resulting update scheme preserve convergence to $Q^\star$.
In a restricted setting with a fixed behavior policy $\mu$ and i.i.d.\ initial state-action pairs, the following theorem establishes an almost-sure convergence guarantee.
\begin{Thm}[Convergence of LKQL]
	\label{thm:lkql-value-learn}
	Consider the iterates $\{(Q_k, U_k)\}_{k \geq 0}$ of \emph{LKQL}~\eqref{eq:lkql-update} in a finite MDP with parameters $n\in\N$, $\lambda\in(0,1]$, $\tau\in[0,1)$, initialization $Q_0\in\set{Q}$, $U_0=0\in\set{Q}$, and TD errors \eqref{eq:td-error-q} and \eqref{eq:u-td-error} computed at each iteration $k\in\N_0$ along a rollout generated from reset distribution $\rho\in\Delta(\set{S}\times\set{A})$ and behavior policy $\mu\in\set{\Pi}$.
	Assume further that
	\begin{enumerate}
		\item the step sizes $\{\alpha_k\}, \{\beta_k\}$ are positive and satisfy $\ssum_k \alpha_k = \ssum_k \beta_k = \infty$, $\ssum_k(\alpha_k^2+\beta_k^2) < \infty$, and $\alpha_k/\beta_k \to 0$;
		\item the initial state-action pairs $z_k\Let(s_{k,0},a_{k,0})$ are sampled from $\rho$ independently of past samples, and $\rho(s,a) > 0$ for all $(s,a)$;
		\item $\lambda < \frac{1-\gamma}{2\gamma}$.
	\end{enumerate}
	Then, $Q_k \to Q^\star$ and $U_k \to 0$ almost surely.
\end{Thm}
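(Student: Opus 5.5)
We treat LKQL as a two-timescale asynchronous stochastic approximation scheme in the style of Borkar, with $U$ on the fast timescale ($\beta_k$) and $Q$ on the slow timescale ($\alpha_k$, with $\alpha_k/\beta_k\to 0$). Since $z_k\sim\rho$ is i.i.d.\ with full support, each coordinate is updated infinitely often at a positive asymptotic frequency $\rho(s,a)$. We can therefore pass to the averaged mean fields, scaled coordinatewise by $\rho$, and use the standard asynchronous results (e.g.~\cite{perkins2012asynchronous}). The rollout tail is drawn fresh given $z_k$, so the noise terms are martingale differences with respect to the natural filtration. Their conditional second moments are bounded by $K(1+\|Q_k\|_\infty^2+\|U_k\|_\infty^2)$, because rewards are bounded and the TD errors are linear in $(Q,U)$.

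\textbf{Fast timescale.} Freeze $Q$. The conditional mean of $\delta^U_{k,0}$ at $z_k=(s,a)$ is
\[
\bigl[(1-\tau)(\op{T}Q-Q)+\tau\op{P}^\mu U-U\bigr](s,a),
\]
since $\E[\delta^Q_{k,0}\mid z_k]=(\op{T}Q-Q)(s,a)$ by the choice $V(s)=\E_{a\sim\pi_Q}Q(s,a)$. This map is a $\tau$-contraction in $\|\cdot\|_\infty$, so the ODE $\dot U=\mathrm{diag}(\rho)\bigl(h(Q)-U\bigr)$, with $h(Q)\Let \op{A}^\mu_\tau(\op{T}Q-Q)$, has a globally asymptotically stable equilibrium. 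The map $h$ is Lipschitz in $Q$, because $\op{T}$ is.

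\textbf{Slow timescale.} Substitute $U=h(Q)$. The conditional mean of the $Q$-increment is $\bigl[\op{C}^\mu_{n,\lambda}(\op{T}Q-Q)+\tfrac{(\gamma\lambda)^n}{1-\gamma\lambda}\op{A}^\mu_\tau(\op{T}Q-Q)\bigr](s,a)$. Indeed, the expectation of $\delta^Q_{k,j}$ along the $\mu$-rollout is $[(\op{P}^\mu)^j(\op{T}Q-Q)](s,a)$, where $\op{T}$ is evaluated with the current greedy $\pi_Q$ and the greedy policy does not depend on samples. Define
\[
F(Q)\Let Q+\Bigl(\op{C}^\mu_{n,\lambda}+\tfrac{(\gamma\lambda)^n}{1-\gamma\lambda}\op{A}^\mu_\tau\Bigr)(\op{T}Q-Q),
\]
which is the LK Harutyunyan operator with $\op{P}^\mu_\infty$ replaced by $\op{A}^\mu_\tau$. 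The slow ODE is then $\dot Q=\mathrm{diag}(\rho)(F(Q)-Q)$.

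\textbf{Key step.} The key step is to show that $F$ is an $\infty$-norm contraction with fixed point $Q^\star$. The fixed point is immediate: $\op{T}Q^\star-Q^\star=0$ gives $F(Q^\star)=Q^\star$, and $h(Q^\star)=0$ gives $U_k\to 0$. For contraction, we follow the proof of Theorem~\ref{thm:convergence-control} verbatim, since it uses only that the tail kernel is a nonnegative stochastic operator. $\op{A}^\mu_\tau$ is such an operator, being a convex combination of powers of $\op{P}^\mu$. The preconditioner $\wt{\op{C}}\Let\op{C}^\mu_{n,\lambda}+\tfrac{(\gamma\lambda)^n}{1-\gamma\lambda}\op{A}^\mu_\tau$ is therefore a nonnegative operator with $\wt{\op{C}}\e=\tfrac{1}{1-\gamma\lambda}\e$ and satisfies $\wt{\op{C}}(\op{I}-\lambda\gamma\op{P}^\mu)\e=\e$. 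Write
\[
F(Q)-F(Q')=\bigl(\op{I}-\wt{\op{C}}\bigr)(Q-Q')+\wt{\op{C}}\bigl(\op{T}Q-\op{T}Q'\bigr).
\]
We bound $|\op{T}Q-\op{T}Q'|\le\gamma\|Q-Q'\|_\infty\e$, and bound $\op{I}-\wt{\op{C}}=-\lambda\gamma\,\wt{\op{C}}\op{P}^\mu+\text{(a residual correction)}$ in norm by $\tfrac{\lambda\gamma}{1-\lambda\gamma}$. Together these give the Lipschitz constant $\eta_2=\tfrac{\gamma(1+\lambda)}{1-\gamma\lambda}<1$ under assumption~(3).

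This key step is the main obstacle, specifically verifying that the identity $\wt{\op{C}}(\op{I}-\lambda\gamma\op{P}^\mu)=\op{I}+(\text{small})$ still behaves correctly with $\op{A}^\mu_\tau$ in place of $\op{P}^\mu_\infty$. With $\op{P}^\mu_\infty$ the absorption $\op{P}^\mu_\infty\op{P}^\mu=\op{P}^\mu_\infty$ makes the tail exact, whereas $\op{A}^\mu_\tau\op{P}^\mu\neq\op{A}^\mu_\tau$. If the direct bound fails, the fallback is to bound crudely: $\|\op{I}-\wt{\op{C}}\|$ acting on the $Q$-part plus $\gamma\|\wt{\op{C}}\|$, writing the $\op{I}$-term together with the $k=0$ term and using only row sums. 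This gives the sum of row sums of the nonnegative parts, $\tfrac{\gamma}{1-\gamma\lambda}+\tfrac{\gamma\lambda}{1-\gamma\lambda}$ times the mass of $\wt{\op{C}}-\op{I}$ normalized, which again yields $\eta_2$ since only row masses enter.

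With contraction established, the slow ODE has $Q^\star$ as its globally asymptotically stable equilibrium, by the standard weighted-max-norm Lyapunov argument for $\mathrm{diag}(\rho)(F-\op{I})$. Boundedness of the iterates follows from the Borkar--Meyn scaling criterion, since the scaled limits of both mean fields are again contractions. The two-timescale theorem then yields $(Q_k,U_k)\to(Q^\star,h(Q^\star))=(Q^\star,0)$ almost surely.
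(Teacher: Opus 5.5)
Your proposal is correct and follows essentially the same route as the paper: two-timescale stochastic approximation with $\rho$-scaled mean fields, a $\tau$-contraction for the fast $U$-dynamics with equilibrium $\op{A}^\mu_\tau(\op{T}Q-Q)$, the $\eta_2$-contraction of $Q\mapsto Q+(\op{C}^\mu_{n,\lambda}+\xi\op{A}^\mu_\tau)(\op{T}Q-Q)$ for the slow ODE, and a Borkar--Meyn-type (two-timescale, Lakshminarayanan--Bhatnagar) scaling argument for boundedness. Your ``fallback'' in the key step is exactly the paper's argument, so the absorption worry is moot, since only nonnegativity and row sums of your $\wt{\op{C}}-\op{I}=\sum_{k=1}^{n-1}(\lambda\gamma\op{P}^\mu)^k+\xi\op{A}^\mu_\tau$ are used; also, your fast ODE should read $\dot U=\mathrm{diag}(\rho)\bigl((1-\tau)(\op{T}Q-Q)+\tau\op{P}^\mu U-U\bigr)=\mathrm{diag}(\rho)(\op{I}-\tau\op{P}^\mu)\bigl(h(Q)-U\bigr)$ rather than $\mathrm{diag}(\rho)\bigl(h(Q)-U\bigr)$, which leaves its equilibrium and global stability unchanged.
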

Notably, the convergence guarantee holds for every fixed $\tau\in[0,1)$.
The \emph{coverage} assumption on $\rho$ guarantees that every state-action pair is visited infinitely often, the standard condition for asynchronous stochastic approximation~\cite{tsitsiklis1994asynchronous, borkar2023stochastic}.
By contrast, the i.i.d.\ assumption on $(s_{k,0}, a_{k,0})$ is an analytical idealization, not a claim that practical samples are exactly independent.
Replay buffers in off-policy methods~\cite{fan2020theoretical, zhang2023dqn} and parallel environments in multi-actor methods~\cite{shen2023a3c} reduce temporal dependence and are commonly modeled as i.i.d.\ sampling oracles in finite-time analyses.
Related analyses also accommodate Markovian sampling~\cite{bhandari2018finite, borkar2023stochastic, khodadadian2022federated}, showing that exact independence is not fundamental to this style of analysis.
We use parallel environments in our experiments in Section~\ref{sec:evaluation}.

Beyond these sampling assumptions, we follow prior off-policy value-learning analyses~\cite{melo2007convergence, chen2022finite, chen2023target, maei2010toward, devraj2017zap, lim2024regularized, lim2025understanding, wu2025unifying} and assume the behavior policy is fixed throughout the iterations.
Recent works relax this restriction by analyzing $Q$-dependent behavior policies~\cite{meyn2024projected, liu2025linear}, a direction we leave to future work.


\section{Evaluation}\label{sec:evaluation}

Section~\ref{sec:acc-ops} established convergence guarantees for the LK operator and the LKQL algorithm in finite MDPs.
We now assess how far these results carry from finite MDPs to those with large or continuous state spaces.
In Section~\ref{sec:eval-finite}, we compare LKHI($\pi$) against HI($\pi$) in operator norm and LKQL against truncated HQL (THQL) in value error for control, where the relevant operator norms and $Q^\star$ can be computed exactly.
In Section~\ref{sec:eval-continuous}, we demonstrate that LKQL, used as the target for a parameterized value function, is effective on the MuJoCo continuous-control benchmark~\cite{todorov2012mujoco}.

\subsection{Finite MDP}\label{sec:eval-finite}

\begin{figure}[t]
	\centering
	\includegraphics[width=0.78\linewidth]{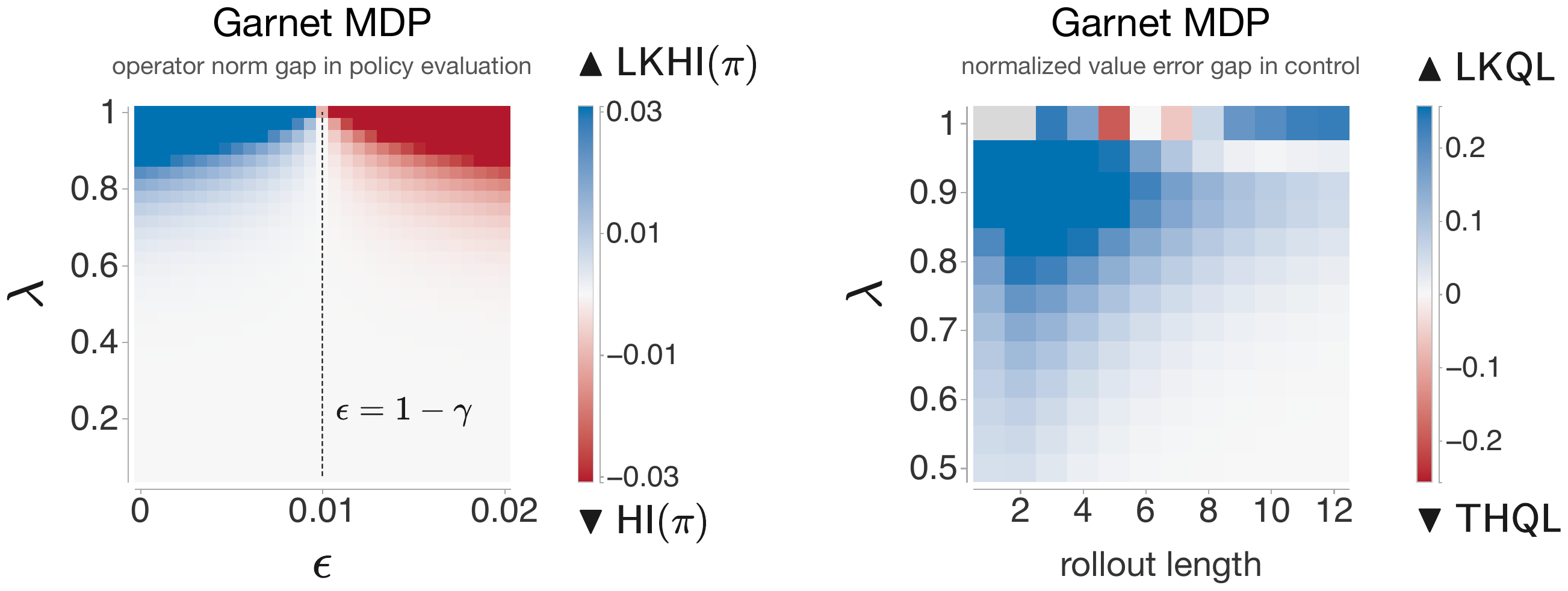}
	\caption{Garnet MDP results. Across both panels, blue marks the regime where the limiting-kernel method improves over its truncated counterpart.
	Left: Exact-model policy evaluation, showing the operator-norm difference between HI($\pi$) and LKHI($\pi$) over~$\epsilon$ and~$\lambda$, averaged over $15$ random Garnet MDPs (Theorem~\ref{thm:conv-lkhi-pi}).
	The dashed line $\epsilon=1-\gamma$ marks the theoretical boundary below which LKHI($\pi$) improves on HI($\pi$) for sufficiently large $n$.
	Right: Model-free control, showing the normalized value-error difference between truncated HQL (THQL) and LKQL after $T=8000$ iterations from a common initialization $Q_0$ (averaged over $32$ seeds), over rollout length~$n$ and~$\lambda$ (Theorem~\ref{thm:lkql-value-learn}).}
	\label{fig:finite}
\end{figure}

Under the exact model, Theorem~\ref{thm:conv-lkhi-pi} predicts that tail completion can tighten the policy evaluation rate of LKHI($\pi$)~\eqref{eq:LKH-iterations} relative to HI($\pi$)~\eqref{eq:HQL-truncated-def}.
We evaluate the $\infty$-norms of their error operators across 15 randomly generated Garnet MDPs with 50 states and 5 actions~\cite{archibald1995generation}.
For each MDP, we compute the difference
\begin{equation*}
	\label{eq:finite-norm-gap}
	\|\op{I} + \op{C}^\mu_{n,\lambda}(\gamma\op{P}^\pi - \op{I})\|_\infty 
	    - \|\op{I} + \wt{\op{C}}^\mu_{n,\lambda}(\gamma\op{P}^\pi - \op{I})\|_\infty.
\end{equation*}
Figure~\ref{fig:finite}~(left) shows the average of this difference across MDPs as $\epsilon$ and $\lambda$ vary, with fixed rollout length $n = 5$.
As shown, LKHI($\pi$) attains the lower rate at small~$\epsilon$ (near-on-policy) and large~$\lambda$, the regime Theorem~\ref{thm:conv-lkhi-pi} identifies.
The ranking reverses toward larger~$\epsilon$, where HI($\pi$) is tighter, and the two operators nearly coincide at small~$\lambda$.

In learning from samples, Theorem~\ref{thm:lkql-value-learn} guarantees that LKQL~\eqref{eq:lkql-update} converges almost surely to $Q^\star$ under a fixed behavior policy and i.i.d.\ initial state-action pairs.
Figure~\ref{fig:finite}~(right) shows, over rollout length~$n$ and~$\lambda$, the normalized value-error advantage of LKQL over THQL, given by
\begin{equation*}
	\label{eq:finite-value-gap}
	\bigl(\|Q_T - Q^\star\|_\infty - \|\wt{Q}_T - Q^\star\|_\infty\bigr)/\|Q_0 - Q^\star\|_\infty,
\end{equation*}
where $Q_T$ and $\wt{Q}_T$ are the final THQL and LKQL iterates, respectively, after $T=8000$ iterations from the same initialization $Q_0$ and using the same step sizes $\alpha_k$ for all $k$.
For $\lambda < 1$, LKQL attains the lower error at every rollout length. The gap is largest for large~$\lambda$ and short~$n$ and narrows as~$n$ grows, consistent with the diminishing contribution of the LK term.
Only at $\lambda = 1$ is LKQL not uniformly better across rollout lengths, diverging at $n \in \{1, 2\}$ and yielding a higher error than THQL at $n \in \{5, 6, 7\}$.

\subsection{Continuous control}\label{sec:eval-continuous}

We now extend LKQL to the continuous-control setting.
As discussed in Section~\ref{sec:val-learn}, many of the design choices in LKQL, including the choice of the limiting kernel for approximating the exact preconditioner and the TD estimator for the LK term, make such an extension possible.

To this end, we parameterize the value function, the LK term, and the policy and denote them by
\[
	V_\phi\colon \set{S} \mapsto \R, \;\; U_\omega\colon \set{S} \mapsto \R, \;\; \text{and} \;\; \pi_\theta\colon \set{S} \mapsto \Delta(\set{A}),
\]
respectively, each modeled by a separate neural network.
The parameters are then updated using the corresponding loss functions, which we explain in detail later in this section.
The on- and off-policy estimators share the same sampling and parameter update scheme.
Algorithm~\ref{alg:lkql} summarizes the resulting procedure, which mirrors fitted Q-iteration~\cite{ernst2005tree, riedmiller2005neural} and actor-critic frameworks~\cite{mnih2016asynchronous, andrychowicz2021matters, cobbe2021phasic}.
That is, each outer iteration (over $k$) collects a single rollout from each of the $N$ parallel instances of the behavior policy, which we refer to interchangeably as ``actors'' (the behavior policy coincides with the target policy in on-policy learning).
An inner loop (over $i$) then performs $K$ minibatch updates on the resulting batch.

\begin{algorithm}[t]
	\caption{Actor-critic implementation of LKQL}
	\label{alg:lkql}
	\begin{algorithmic}[1]
		\STATE \texttt{initialize:} $V_\phi$ (and $P_\phi$)\footnotemark, $U_\omega$, and $\pi_\theta$
		\FOR{iteration $k = 0, 1, 2, \dots$}
		\STATE \texttt{sample:} rollouts from $N$ parallel actors to form $\set{D}$
		\STATE \texttt{compute:} targets $G_V$ (or $G_Q$)\footnotemark[\value{footnote}] and $G_U$ on $\set{D}$
		\FOR{iteration $i = 1, \dots, K$}
		\STATE \texttt{draw:} minibatch $\set{B}$ from $\set{D}$
		\STATE \texttt{update:} $\phi$, $\omega$, and $\theta$ on $\set{B}$
		\ENDFOR
		\ENDFOR
	\end{algorithmic}
\end{algorithm}
\footnotetext{Parenthesized terms in Algorithm~\ref{alg:lkql} apply to the off-policy estimator: ``and'' indicates additions, ``or'' indicates substitutions.}

\textbf{On-policy estimator.}
Here, we use the actor-critic model of PPO~\cite{schulman2017proximal} as the backbone and focus on the value (critic) updates.
For the policy updates, we retain PPO's clipped surrogate objective.
The LKQL update~\eqref{eq:lkql-update} then translates to the loss functions
\begin{equation}
	\label{eq:loss-value-lk-on-policy}
	L(\phi) \Let \E_{s \sim \set{D}} \bigl[\bigl(G_V(s) - V_\phi(s)\bigr)^2\bigr],
	\;\; \text{and} \;\;
	L(\omega) \Let \E_{s\sim \set{D}} \bigl[\bigl(G_U(s) - U_\omega(s)\bigr)^2\bigr],
\end{equation}
where $\set{D}$ is the minibatch distribution of states sampled from the rollouts, and $G_V$ and $G_U$ are the respective target functions.
Crucially, the target functions are computed once before the inner loop and kept fixed throughout the parameter updates in the inner loop, hence reflecting the fitted Q-iteration scheme.

For the definitions below, we set $n$ to the collected sequence length.
At iteration $k$, for a rollout generated by an actor and starting from state $s_{k,0}$, we define the state-value estimate (i.e., the return, or target value) by
\begin{equation}
	\label{eq:G-V}
	G_V(s_{k,t}) \Let \ssum_{j=0}^{n-t-1} (\gamma\lambda)^j \delta^{V}_{k,t+j} + \frac{(\gamma\lambda)^{n-t}}{1 - \gamma\lambda}\, U_\omega(s_{k,t}) + V_\phi(s_{k,t}), \quad t\in\{ 0,1, \dots, n-1\},
\end{equation}
and the corresponding $V$-TD errors by
\begin{equation}
	\label{eq:td-error-on-policy}
	\delta^{V}_{k,t+j} \Let r_{k,t+j} + \gamma\,V_\phi(s_{k,t+j+1}) - V_\phi(s_{k,t+j}), \quad j\in\{ 0,1, \dots, n-t-1\}.
\end{equation}
Similarly, with a separate trace coefficient $\lambda_U \in (0, 1]$ for the LK term, we define the target for $U_\omega$ by
\[
	G_U(s_{k,t}) \Let \ssum_{j=0}^{n-t-1} (\tau\lambda_U)^{j} \delta^{U}_{k,t+j} + U_\omega(s_{k,t}), \quad t\in\{ 0,1, \dots, n-1\},
\]
and the corresponding $U$-TD errors, computed using the $V$-TD errors~\eqref{eq:td-error-on-policy}, by
\begin{equation}
    \label{eq:td-error-U}
	\delta^{U}_{k,t+j} \Let (1 - \tau)\,\delta^{V}_{k,t+j} + \tau\,U_\omega(s_{k,t+j+1}) - U_\omega(s_{k,t+j}), \quad j\in\{ 0,1, \dots, n-t-1\}.
\end{equation}
For the policy loss $L(\theta)$, we follow the clipped surrogate objective of PPO with an entropy regularization term, using the advantage term $A(s_{k,t}, a_{k,t}) \Let G_V(s_{k,t}) - V_\phi(s_{k,t})$.
Combining the policy loss with the loss functions in~\eqref{eq:loss-value-lk-on-policy}, we obtain the final loss function
\begin{equation}
	\label{eq:loss-on-policy}
	L(\phi, \omega, \theta) \Let L(\phi) + \beta_U L(\omega) + \beta_\pi L(\theta),
\end{equation}
where $\beta_U$ and $\beta_\pi$ are the respective loss coefficients.
Using $\pi_\theta$ for the $\texttt{sample}$ step and the loss function in~\eqref{eq:loss-on-policy} for the $\texttt{update}$ step, Algorithm~\ref{alg:lkql} summarizes the resulting on-policy LKQL extension of PPO.

\textbf{Off-policy estimator.}
We now turn to the off-policy LKQL estimator, in which we borrow the $Q$-value function architecture of Normalized Advantage Function (NAF)~\cite{gu2016continuous}.
Specifically, we represent $Q$-values as the sum of the state value $V_\phi$ and a quadratic advantage centered at the deterministic target policy $\bar{\pi}_\theta\colon \set{S} \mapsto \set{A}$, that is,
\[
	Q_{\theta, \phi}(s, a) \Let V_\phi(s) - \tfrac{1}{2}(a - \bar{\pi}_\theta(s))^\top P_\phi(s)(a - \bar{\pi}_\theta(s)),
\]
where $P_\phi(s)$ is a positive-definite matrix.
In this representation, with $\pi_\theta(\cdot \vert s)$ defined as the Dirac distribution at $\bar{\pi}_\theta(s)$, the relation $V_\phi(s) = \E_{a \sim \pi_\theta}[Q_{\theta, \phi}(s, a)]$ holds explicitly for any $\theta$ and $s \in \set{S}$, hence rendering the value learning scheme off-policy.

Due to the inherent relation $\bar{\pi}_\theta(s) = \argmax_a Q_{\theta, \phi}(s, a)$, the policy improvement step is implicit, and we define only the $Q$-value loss for updating the parameters $\phi$ and $\theta$ together by
\[
	L(\phi, \theta) \Let \E_{(s, a) \sim \set{D}} \bigl[\bigl(G_Q(s, a) - Q_{\theta, \phi}(s, a)\bigr)^2\bigr].
\]
Above, $\set{D}$ is again the minibatch distribution, and $G_Q$ is the return, given by
\begin{equation*}
	\label{eq:G-Q}
	G_Q(s_{k,t}, a_{k,t}) \Let \ssum_{j=0}^{n-t-1} (\gamma\lambda)^j \delta^{Q}_{k,t+j} + \frac{(\gamma\lambda)^{n-t}}{1 - \gamma\lambda}\, U_\omega(s_{k,t}) + Q_{\theta, \phi}(s_{k,t}, a_{k,t}), \quad t\in\{ 0,1, \dots, n-1\},
\end{equation*}
with $Q$-TD errors
\begin{equation}
	\label{eq:td-error-off-policy}
	\delta^{Q}_{k,t+j} \Let r_{k,t+j} + \gamma\,V_{\phi}(s_{k,t+j+1}) - Q_{\theta, \phi}(s_{k,t+j}, a_{k,t+j}), \quad j \in\{ 0, \dots, n-t-1\}.
\end{equation}
Although the value update is off-policy, the update for the LK term remains on-policy under the behavior policy $\mu$ used to define $\op{A}_\tau^\mu$.
We keep $U_\omega$ as a function of state.
The loss function $L(\omega)$, the target $G_U$, and the $U$-TD errors for the LK term are the same as in the on-policy estimator, except $Q$-TD errors~\eqref{eq:td-error-off-policy} replace $V$-TD errors in computing $U$-TD errors~\eqref{eq:td-error-U}.
The final loss function is then given by
\begin{equation}
	\label{eq:loss-off-policy}
	L(\phi, \omega, \theta) \Let L(\phi, \theta) + \beta_U L(\omega),
\end{equation}
where $\beta_U$ is the loss coefficient for the LK term.
The off-policy loss~\eqref{eq:loss-off-policy} drives the $\texttt{update}$ step of Algorithm~\ref{alg:lkql}, while the $\texttt{sample}$ step draws actions as $a = \bar{\pi}_\theta(s) + P_\phi(s)^{-1/2}w$ for $w \sim \mathcal{N}(0, I)$.

\begin{figure}[t]
	\centering
	\includegraphics[width=0.92\linewidth]{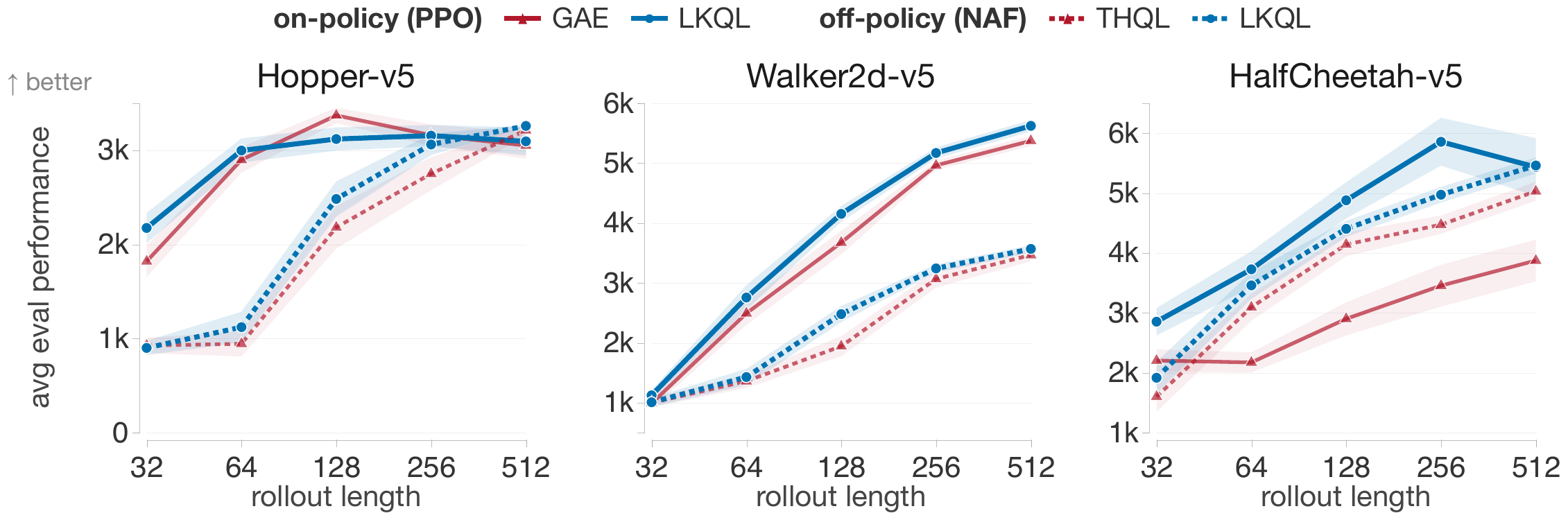}
	\caption{MuJoCo continuous-control results.
	Final evaluation performance (mean $\pm$ s.e.m.\ over $32$ seeds) versus rollout length, comparing LKQL (blue) against its baseline value estimator (red) within each actor-critic backbone: GAE in PPO (on-policy, solid) and truncated HQL (THQL) in NAF (off-policy, dashed).}
	\label{fig:mujoco-results}
\end{figure}

\textbf{Experiments.}
LKQL extends two standard $\lambda$-return baselines, namely GAE~\cite{schulman2016gae} for on-policy learning and THQL~\cite{harutyunyan2016q} for off-policy learning.
Both are recovered from Algorithm~\ref{alg:lkql} by setting $U_\omega \equiv 0$.
Specifically, the baselines use the PPO backbone in the on-policy case and the NAF backbone in the off-policy case.
We benchmark LKQL against these baselines on \texttt{Hopper}, \texttt{Walker2d}, and \texttt{HalfCheetah} from the MuJoCo continuous-control benchmark~\cite{todorov2012mujoco}.
We train each method for $T=500$ outer iterations (see Algorithm~\ref{alg:lkql}) with $N=16$ parallel actors and several rollout lengths.
Figure~\ref{fig:mujoco-results} reports the final evaluation performance (cumulative episodic reward), averaged over $S=32$ seeds (runs).
Full hyperparameters and architectures are detailed in Appendix~\ref{appx:exp-setup}.
The baselines are tuned over their respective hyperparameter sets, and LKQL inherits the resulting best settings while tuning only its own hyperparameters, namely, the LK trace coefficient $\lambda_U$, the loss coefficient $\beta_U$, the LK-term discount factor $\tau$, the $n$-step truncation horizon,  and the optimizer parameters for $\omega$.

Figure~\ref{fig:mujoco-results} shows that backups with LK tail completion achieve higher returns than the baselines across most tested rollout lengths on \texttt{Hopper}, \texttt{Walker2d}, and \texttt{HalfCheetah} in both on- and off-policy learning.
The largest improvement occurs on \texttt{HalfCheetah}.
Unlike \texttt{Hopper} and \texttt{Walker2d}, which terminate early when the simulated body falls, \texttt{HalfCheetah} has no termination condition.
Its episodes run the full length, giving it the longest effective horizon.
This longer effective horizon in turn increases the relevance of the long-run behavior modeled by the LK term, which we argue accounts for the larger improvement here.
On this task, the improvement over the baseline is also more pronounced in the on-policy setting than in the off-policy setting, consistent with the acceleration being greatest near the on-policy regime.
See Remark~\ref{rem:conv-lkhi-pi} and Figure~\ref{fig:finite}.

\section{Conclusion}\label{sec:conc}

$\lambda$-return algorithms are often realized via $n$-step truncation~\cite{mnih2016asynchronous, schulman2016gae, hessel2018rainbow, espeholt2018impala}, which discards the tail of the untruncated preconditioner and thereby yields a short-horizon update.
The LK Harutyunyan operator augments the $n$-step truncation by approximating the omitted tail using the limiting kernel (LK) of the behavior transition operator.
We showed that, for an aperiodic behavior chain and in the near-on-policy regime, the proposed operator improves the policy evaluation convergence rate over the truncated HQL (THQL) operator for sufficiently large $n$.

LKQL implements this operator through the double backup.
In practice, each backup maintains a separate value function, one for the critic and one for the LK term, allowing straightforward integration into existing actor-critic frameworks.
Together, the two backups bridge long and short horizons within a single value estimator.
We analyzed this estimator on two fronts, convergence and empirical performance.
We proved in Section~\ref{sec:val-learn} that it converges almost surely to the optimal value function $Q^\star$ in finite MDPs under a fixed behavior policy and i.i.d.\ initial state-action pairs.
Empirically, in Section~\ref{sec:eval-continuous}, we showed that, on the MuJoCo continuous-control benchmark, it outperforms GAE in the on-policy setting and THQL in the off-policy setting across most rollout lengths.

The LK extension of the truncated $\lambda$-return operators applies beyond THQL and GAE, our off-policy and on-policy baselines, respectively.
For instance, setting $\mu = \pi$, the HQL operator can be rewritten as
\begin{equation}
	\label{eq:HQL-pi}
	\op{R}_{\infty,\lambda}^{\pi,\pi} Q = \op{C}_{\infty,\lambda}^{\pi} \bigl(r + (1 - \lambda)\gamma \op{P}^\pi Q\bigr).
\end{equation}
However, replacing $\op{C}_{\infty,\lambda}^{\pi}$ with $\op{C}_{\infty,\lambda}^{\mu}$ in~\eqref{eq:HQL-pi} does not recover the HQL operator~$\op{R}_{\infty,\lambda}^{\mu,\pi}$.
Instead, the operator defined by $\op{S}_{\infty,\lambda}^{\mu,\pi} Q \Let \op{C}_{\infty,\lambda}^{\mu} (r + (1 - \lambda)\gamma \op{P}^\pi Q)$ recovers \textit{Peng \& Williams' Q($\lambda$)} (PQL)~\cite{peng1994incremental, kozuno2021revisiting}.
In a similar fashion, one can define the \emph{$n$-step Peng \& Williams operator} $\op{S}^{\mu,\pi}_{n,\lambda}\colon \set{Q} \mapsto \set{Q}$ by
\begin{align*}
	\op{S}_{n,\lambda}^{\mu,\pi} Q & \Let Q + \op{C}_{n,\lambda}^{\mu} ( \op{T}^{\lambda\mu+(1-\lambda)\pi}Q - Q), \quad \mu,\pi\in\set{\Pi},\, n\in \N,\, \lambda \in (0,1].
\end{align*}
We note that by changing the target policy, the limiting-kernel approximation readily extends to the truncated PQL operator $\op{S}^{\mu,\pi}_{n,\lambda}$.
Beyond PQL, conservative off-policy operators such as Retrace~\cite{munos2016safe}, with $c(s,a)=\min\{1,\nicefrac{\pi(a\vert s)}{\mu(a\vert s)}\}$, and Tree-Backup~\cite{precup2000eligibility}, with $c(s,a)=\pi(a\vert s)$, give rise to corrected operators $\op{P}^{c,\mu}$. Extending the double-backup scheme to these operators would require replacing $\op{P}^{\mu}$ in the preconditioner with $\op{P}^{c,\mu}$ and deriving a suitable tail approximation together with the corresponding auxiliary estimators. This nontrivial extension is left for future work.

Besides these potential extensions, we acknowledge several limitations of LKQL.
One such limitation is that our convergence analysis of LKQL in control assumes a fixed behavior policy.
We believe this assumption can be relaxed using the techniques developed in~\cite{meyn2024projected, liu2025linear}.
On the practical side, LKQL introduces a second critic for the LK term, modeled as a separate network with the same hidden sizes as the critic.
This increases both the memory footprint and the computational cost of the critic updates.
Additionally, estimating the LK term introduces the hyperparameters $\tau$, $\beta_U$, and $\lambda_U$, along with those of the optimizer used for its critic.

%
%
\appendix

\section{Technical Proofs}\label{appx:proofs}

In this section, we provide the proofs for the main results.
We begin by collecting notation and identities shared across all proofs.

\subsection{Notation and Preliminaries}\label{appx:notation}

Following~\cite{harutyunyan2016q}, for a behavior policy $\mu$ and a target policy $\pi$, we define the decomposition
\begin{subequations}\label{eq:decomposeXW-def}
\begin{equation}\label{eq:decompose-def}
    \gamma \op{P}^\pi - \op{I} = \op{X} + \op{W},
\end{equation}
where
\begin{equation}\label{eq:XW-def}
    \op{X} \Let \gamma \lambda \op{P}^\mu - \op{I}, \qquad
	\op{W} \Let \gamma\lambda(\op{P}^\pi - \op{P}^\mu) + \gamma(1 - \lambda)\op{P}^\pi.
\end{equation}
\end{subequations}
Above, the operator $\op{X}$ captures the component aligned with the behavior chain (and hence the part of the Bellman residual absorbed by the preconditioner via the telescoping identity as we shall see), while $\op{W}$ captures the off-policy mismatch and the $\lambda$-discounting.
We further define the scalar
\begin{equation*}\label{eq:xi-def}
	\xi \Let \frac{(\gamma \lambda)^{n}}{1 - \gamma \lambda}.
\end{equation*}
Note that $\xi$ is the $\infty$-norm of the LK tail completion operator $\frac{(\gamma\lambda)^n}{1-\gamma\lambda}\op{P}_\infty^\mu$ in~\eqref{eq:LK-precondition}.

The following identities are used repeatedly and referenced by name in subsequent proofs:
\begin{subequations}
    \begin{align}
    &\op{I} + \Cn\, \op{X} = (\gamma\lambda \op{P}^\mu)^n, & \text{(telescoping)} \label{eq:telescoping}\\
    &\op{P}_\infty^\mu \op{P}^\mu = \op{P}^\mu \op{P}_\infty^\mu = \op{P}_\infty^\mu, & \text{(absorption)} \label{eq:absorption}\\
    &\op{P}^\pi \e = \op{P}^\mu \e = \op{P}_\infty^\mu \e = \e. & \text{(row-sum)} \label{eq:row-sum}
\end{align}
\end{subequations}
We also note that, for any non-negative finite operator $\op{A}$, 
we have~\cite[\S{}C.1]{puterman1994markov}
\begin{equation}
    \label{id:nonneg-norm}
    \norm{\op{A}}_\infty = \max_i \ssum_j \op{A}_{ij} = \norm{\op{A}\,\e}_\infty.
\end{equation}
In particular, using the row-sum identity~\eqref{eq:row-sum}, we have
\[
\|\op{P}^\pi\|_{\infty} = \|\op{P}^\mu\|_{\infty} = \|\op{P}_\infty^\mu\|_{\infty} = 1.
\]
Similarly, for the preconditioner $\op{C}_{n,\lambda}^{\mu}$, using~\eqref{id:nonneg-norm} and the row-sum identity~\eqref{eq:row-sum}, we obtain
\begin{equation}\label{eq:bound_trunc_precond}
    \|\op{C}_{n,\lambda}^{\mu}\|_{\infty} = \big\|  \ssum_{k = 0}^{n-1} (\lambda \gamma \op{P}^{\mu})^k\big\|_{\infty} = \ssum_{k = 0}^{n-1} (\lambda \gamma)^k = \frac{1-(\gamma\lambda)^n}{1-\gamma\lambda}.
\end{equation}
Also, by defining $\op{C}_{0, \lambda}^{\mu} \Let 0$, the construction of the preconditioner $\op{C}_{n,\lambda}^{\mu}$ satisfies the recursion
\begin{equation}\label{eq:trunc_precond_recur}
    \op{C}_{n+1, \lambda}^{\mu}  = \op{I} + \gamma\lambda \op{P}^\mu\, \Cn, \quad \forall n\in\N_0.
\end{equation}

Finally, recall that $\epsilon \Let \max_{s \in \set{S}}\norm{\mu(\cdot \vert s) - \pi(\cdot \vert s)}_1$ denotes the discrepancy between behavior and target policies.
We have (see the proof of~\cite[Lem.~1]{harutyunyan2016q})
\begin{equation*}
    \label{eq:bound-transition-norm}
    \norm{\op{P}^\pi - \op{P}^\mu}_\infty \leq \epsilon.
\end{equation*}
Using the preceding inequality and the definition of $\op{W}$ in~\eqref{eq:XW-def}, we can also write
\begin{equation}\label{eq:bound_W}
    \norm{\op{W}}_\infty \leq \gamma\lambda\epsilon + \gamma(1-\lambda) = (1-\gamma\lambda)\eta_{\epsilon}.
\end{equation}

\subsection{Proof of Lemma~\ref{lem:convergence-h}}\label{proof:convergence-h}

\textbf{Policy evaluation.}
Let $E_\ell \Let Q_\ell - Q^\pi$ be the value error of the $\ell$-th iteration of HI($\pi$). 
Since $\op{T}^\pi Q = \gamma \op{P}^\pi Q + r$ and $\op{T}^\pi Q^\pi = Q^\pi$, the Bellman residual factors as
\[
\op{T}^\pi Q - Q = (\op{T}^\pi Q - \op{T}^\pi Q^\pi) - (Q- Q^\pi) = (\gamma \op{P}^\pi - \op{I})(Q - Q^\pi).
\]
Then, using the definition of $\op{R}^{\mu, \pi}_{n, \lambda}$, we can write $E_{\ell+1} = \op{M}_n\, E_\ell$, where $\op{M}_n \Let \op{I} + \Cn(\gamma \op{P}^\pi - \op{I})$ is the error operator of HI($\pi$). 
Below, we show that $m_n \Let \|\op{M}_n\|_\infty \leq \alpha_{\mathrm{e}}$, from which $\|E_\ell\|_{\infty} \leq \alpha_{\mathrm{e}}^{\ell}\, \|E_0\|_{\infty}$ follows.

First, using the decomposition~\eqref{eq:decomposeXW-def} and applying the telescoping identity~\eqref{eq:telescoping}, the error operator can be equivalently characterized as $\op{M}_n = (\gamma\lambda \op{P}^\mu)^n + \Cn \op{W}$.
Then, the recursion~\eqref{eq:trunc_precond_recur} for the preconditioner implies that $\op{M}_n$ satisfies
\begin{equation*}
	\op{M}_{n+1} = \op{W} + \gamma\lambda \op{P}^\mu\, \op{M}_{n}, \quad \forall n\in\N.
\end{equation*}
Taking norms in the preceding recursion and using~\eqref{eq:bound_W} then gives
\begin{equation}\label{eq:hi_error_norm}
	m_{n+1} \leq \norm{\op{W}}_\infty + \gamma\lambda\,m_{n} \leq (1-\gamma\lambda)\,\eta_{\epsilon} + \gamma\lambda\,m_{n}, \quad \forall n\in\N.
\end{equation}
For $n = 1$, we have 
\begin{equation*}
	m_1 = \norm{\op{M}_1}_\infty = \norm{\op{W} + \gamma\lambda\op{P}^\mu}_\infty = \norm{\gamma\op{P}^\pi}_\infty = \gamma = \alpha_{\mathrm{e}}.
\end{equation*}
For $n \geq 2$, iterating~\eqref{eq:hi_error_norm} also yields
\begin{align*}
	m_n \leq (\gamma\lambda)^{n-1}m_1 + (1-\gamma\lambda)\eta_{\epsilon}\ssum_{j=0}^{n-2}(\gamma\lambda)^j 
	  = (\gamma\lambda)^{n-1}\bigl(\gamma - \eta_{\epsilon}\bigr) + \eta_{\epsilon} = \alpha_{\mathrm{e}}.
\end{align*}
This completes the first part of the proof. 
For $n\geq2$, we now consider the condition $\alpha_{\mathrm{e}} < 1$ to derive the bound on~$\epsilon$.
Starting from the definition of $\alpha_{\mathrm{e}}$ and isolating $\eta_{\epsilon}$
yields
\begin{equation*}
	\alpha_{\mathrm{e}} = (\gamma\lambda)^{n-1}(\gamma - \eta_{\epsilon}) + \eta_{\epsilon} < 1
	\;\iff\;
	\eta_{\epsilon}\bigl(1 - (\gamma\lambda)^{n-1}\bigr) < 1 - \gamma(\gamma\lambda)^{n-1}.
\end{equation*}
Substituting $\eta_{\epsilon}$ then gives
\begin{equation*}
	\gamma\bigl(1 + \lambda(\epsilon - 1)\bigr)\bigl(1 - (\gamma\lambda)^{n-1}\bigr) < (1 - \gamma\lambda)\bigl(1 - \gamma(\gamma\lambda)^{n-1}\bigr).
\end{equation*}
Rearranging terms yields $\epsilon < \epsilon_{\max}$ as claimed. 

\textbf{Control.}
Now, let $E_\ell \Let Q_\ell - Q^\star$ be the value error of the $\ell$-th iteration of HI for the control problem. 
Below, we show that $\|E_{\ell+1}\|_{\infty} \leq \beta_{\mathrm{c}}\, \|E_{\ell}\|_{\infty}$, from which the first claim in the statement of the lemma follows. 
Recall that $\pi_Q$ is greedy with respect to $Q$, and hence $\op{T}^{\pi_Q} Q = \op{T} Q$.
Using $\op{T} Q^\star = Q^\star$ and the recursion~\eqref{eq:trunc_precond_recur} for the preconditioner (with $\op{C}_{0, \lambda}^{\mu} \Let 0$), the value error can be written as
\begin{align*}
	E_{\ell+1} &=  \op{R}^{\mu, \pi_Q}_{n, \lambda} Q_\ell - Q^\star 
    = Q_\ell + \op{C}_{n,\lambda}^{\mu} (\op{T} Q_\ell - Q_\ell) - Q^\star \\
    &=  \op{C}_{n,\lambda}^{\mu} \big( (\op{T} Q_\ell - \op{T} Q^\star) - ( Q_\ell - Q^\star) \big) + ( Q_\ell - Q^\star) \\
    &= \Cn(\op{T} Q_\ell - \op{T} Q^\star) - \gamma\lambda\op{P}^\mu \op{C}_{n-1, \lambda}^{\mu}E_\ell.
\end{align*}
Taking norms and using the fact that $\op{T}$ is a $\gamma$-contraction, we have 
\begin{align*}
	\|E_{\ell+1}\|_{\infty} \leq \Big( \gamma\,\|\Cn\|_{\infty} + \gamma\lambda\, \|\op{P}^\mu\|_{\infty}\, \|\op{C}_{n-1, \lambda}^{\mu}\|_{\infty} \Big) \|E_{\ell}\|_{\infty}.
\end{align*}
Finally, using the bound~\eqref{eq:bound_trunc_precond} (which also holds for $n=0$ since $\op{C}_{0, \lambda}^{\mu} \Let 0$ by our definition), we obtain
\begin{align*}
	\|E_{\ell+1}\|_{\infty} \leq \frac{\gamma(1 - (\gamma\lambda)^n) + \gamma\lambda - (\gamma\lambda)^n}{1 - \gamma\lambda}\|E_{\ell}\|_{\infty}
	  = \biggl[\eta_2 - \frac{(1 + \gamma)(\gamma\lambda)^n}{1 - \gamma\lambda}\biggr] \|E_{\ell}\|_{\infty} = \beta_{\mathrm{c}}\, \|E_{\ell}\|_{\infty}.
\end{align*}
We next consider the condition $\beta_{\mathrm{c}} < 1$ to derive the corresponding inequality for $\lambda$.
The first characterization of $\beta_{\mathrm{c}}$ above yields
\begin{equation*}
	\beta_{\mathrm{c}} < 1 \iff \gamma(1+\lambda) - (1+\gamma)(\gamma\lambda)^n < 1 - \gamma\lambda.
\end{equation*}
Rearranging then gives the implicit inequality
\begin{equation*}
	\lambda < \frac{1 - \gamma}{2\gamma} + \frac{(1+\gamma)(\gamma\lambda)^n}{2\gamma}.
\end{equation*}
This completes the proof.

\subsection{Proof of Lemma~\ref{lem:lk-aperiodicity}}\label{proof:lk-aperiodicity}

Let $\Delta^{\pi}_k \Let (\op{P}^\pi)^k - \op{P}_\infty^\pi$ for $k\in\N_0$ so that the finite-horizon limiting-kernel residual is given by $\delta^{\pi}_k = \norm{\Delta^{\pi}_k}_\infty$. 
The residuals from truncation and tail completion are given by
\begin{align*}
	\delta_n      & \Let \|\op{C}^{\pi}_{\infty,\lambda} - \op{C}^{\pi}_{n,\lambda}\|_\infty = \big\|\ssum_{k \geq n} (\gamma \lambda)^k (\op{P}^\pi)^k\big\|_\infty,                                     \\
	\wt{\delta}_n & \Let \|\op{C}^{\pi}_{\infty,\lambda} - \wt{\op{C}}^{\pi}_{n,\lambda}\|_\infty = \big\|\ssum_{k \geq n} (\gamma \lambda)^k \Delta^{\pi}_k\big\|_\infty,
\end{align*}
where the second equality follows from the representation~\eqref{eq:LK-precondition-alternative}.
Now, observe that by the non-negative operator norm identity~\eqref{id:nonneg-norm} and the row-sum identity~\eqref{eq:row-sum}, we obtain
\begin{equation*}
	\delta_n = \big\|\ssum_{k \geq n} (\gamma \lambda)^k (\op{P}^\pi)^k \e\big\|_\infty 
    = \big\|\ssum_{k \geq n} (\gamma \lambda)^k \e\big\|_\infty 
    = \ssum_{k \geq n} (\gamma \lambda)^k
    = \dfrac{(\gamma\lambda)^n}{1 - \gamma\lambda}.
\end{equation*}
For $\wt{\delta}_n$, the triangle inequality and the aperiodicity bound~\eqref{eq:p-infty-convergence} give
\begin{equation*}
	\wt{\delta}_n \leq \ssum_{k \geq n} (\gamma \lambda)^k\,\delta^{\pi}_k 
    \leq  c_{\pi} \ssum_{k \geq n} (\gamma \lambda \sigma_{\pi})^k = \dfrac{c_{\pi}\,(\gamma\lambda\sigma_{\pi})^n}{1 - \gamma\lambda\sigma_{\pi}}.
\end{equation*}
Dividing the bound on $\wt{\delta}_n$ by the expression for $\delta_n$, we obtain
\begin{equation*}
	\frac{\wt{\delta}_n}{\delta_n} \leq \frac{c_{\pi}(1 - \gamma\lambda)}{1 - \gamma\lambda\sigma_{\pi}} \cdot \sigma_{\pi}^n  =  \mathcal{O}(\sigma_{\pi}^n).
\end{equation*}

It remains to show the strict inequality for $n \geq n_\pi$. The absorption identity~\eqref{eq:absorption} gives $\Delta^{\pi}_{k+1} = \Delta^{\pi}_k\,\op{P}^\pi$, and hence 
\begin{equation*}
	\delta^{\pi}_{k+1} \leq \delta^{\pi}_k\, \norm{\op{P}^\pi}_\infty = \delta^{\pi}_k.
\end{equation*}
That is, $\delta^{\pi}_k$ is non-increasing in $k$.
By the definition of $n_\pi$ in~\eqref{eq:tail-improvement-threshold}, the residual $\delta^{\pi}_k$ drops below one at $k = n_\pi$, and by monotonicity it stays below one for every $k \geq n_\pi$.
Substituting this strict bound gives
\begin{equation*}
	\wt{\delta}_n \leq \ssum_{k \geq n} (\gamma\lambda)^k \, \delta^{\pi}_k < \ssum_{k \geq n} (\gamma\lambda)^k = \delta_n, \quad \forall n\geq n_\pi.
\end{equation*}

\subsection{Proof of Theorem~\ref{thm:conv-lkhi-pi}}\label{proof:conv-lkhi-pi}

Let $E_\ell \Let Q_\ell - Q^\pi$ be the value error of the $\ell$-th iteration of LKHI($\pi$). 
Using the same argument as in the policy evaluation part of the proof of Lemma~\ref{lem:convergence-h}, we obtain $E_{\ell+1} = \wt{\op{M}}_n\, E_\ell$, where $\wt{\op{M}}_n \Let \op{I} + \wt{\op{C}}^{\mu}_{n,\lambda}(\gamma\op{P}^\pi - \op{I})$ is the error operator.
Below, we show that $\|\wt{\op{M}}_n\|_\infty \leq \wt{\alpha}_{\mathrm{e}}$, from which the first claim in the statement of the theorem follows.

Combining the identity $\wt{\op{C}}^{\mu}_{n, \lambda} = \op{C}^{\mu}_{n, \lambda} + \xi \op{P}_\infty^\mu$ in~\eqref{eq:LK-precondition} with the decomposition~\eqref{eq:decomposeXW-def}, we can expand the error operator into four terms as follows
\begin{equation}\label{eq:lkhi-pe-expanded}
	\wt{\op{M}}_n = (\op{I} + \Cn\op{X}) + \Cn\op{W} + \xi\op{P}_\infty^\mu\op{X} + \xi\op{P}_\infty^\mu\op{W}.
\end{equation}
Now, observe that the telescoping identity~\eqref{eq:telescoping} collapses the first term to $(\gamma\lambda\op{P}^\mu)^n$, while the absorption identity~\eqref{eq:absorption} reduces the third term to $-(\gamma\lambda)^n\op{P}_\infty^\mu$.
Summing these two terms exposes the mixing residual $\Delta^{\mu}_n \Let (\op{P}^\mu)^n - \op{P}_\infty^\mu$, and~\eqref{eq:lkhi-pe-expanded} reduces to
\begin{equation}\label{eq:lkhi-pe-error-operator}
	\wt{\op{M}}_n = (\gamma\lambda)^n \Delta^{\mu}_n + \Cn\op{W} + \xi\op{P}_\infty^\mu\op{W}.
\end{equation}
Applying the triangle inequality and using $\delta^{\mu}_n = \|\Delta^{\mu}_n\|_{\infty}$ by definition and the bounds~\eqref{eq:bound_trunc_precond} and~\eqref{eq:bound_W}, we obtain
\begin{align*}
	\|\wt{\op{M}}_n\|_\infty & \leq  (\gamma\lambda)^n \delta^{\mu}_n + \dfrac{1-(\gamma\lambda)^n}{1-\gamma\lambda} \cdot (1-\gamma\lambda)\eta_{\epsilon} + \dfrac{(\gamma \lambda)^{n}}{1 - \gamma \lambda} \cdot (1-\gamma\lambda)\eta_{\epsilon}\\
    & = (\gamma\lambda)^n \delta^{\mu}_n + \eta_{\epsilon} = \wt\alpha_{\mathrm{e}}.
\end{align*}

For convergence, we require $\wt\alpha_{\mathrm{e}} < 1$, which is equivalent to $\eta_{\epsilon} < 1 - (\gamma\lambda)^n \delta^{\mu}_n$.
Substituting $\eta_{\epsilon}$, we arrive at 
\begin{equation*}
	\gamma(1-\lambda) + \gamma\lambda\,\epsilon < (1-\gamma\lambda)\bigl(1 - (\gamma\lambda)^n \delta^{\mu}_n\bigr),
\end{equation*}
which rearranges to $\epsilon < \wt{\epsilon}_{\max}$ as claimed. 

Finally, comparing against the HI($\pi$) rate $\alpha_{\mathrm{e}}$ from Lemma~\ref{lem:convergence-h}, we have
\begin{equation*}
	\alpha_{\mathrm{e}} - \wt\alpha_{\mathrm{e}}
	= (\gamma\lambda)^{n-1}(\gamma - \eta_{\epsilon}) - (\gamma\lambda)^n \delta^{\mu}_n
	= (\gamma\lambda)^{n-1}\bigl((\gamma - \eta_{\epsilon}) - \gamma\lambda\delta^{\mu}_n\bigr).
\end{equation*}
Using the identity $\gamma - \eta_{\epsilon} = \gamma\lambda(1-\gamma-\epsilon)/(1-\gamma\lambda)$, we obtain
\begin{equation*}
	\wt\alpha_{\mathrm{e}} < \alpha_{\mathrm{e}} \iff  \delta^{\mu}_n < \frac{1-\gamma-\epsilon}{1-\gamma\lambda}.
\end{equation*}
This completes the proof. 

\subsection{Proof of Corollary~\ref{cor:conv-lkhi-pi}}\label{proof:cor:conv-lkhi-pi}
The result follows from Theorem~\ref{thm:conv-lkhi-pi} and the aperiodicity bound~\eqref{eq:p-infty-convergence}. 
Indeed, if 
\[
n > \dfrac{\log\left(\frac{c_\mu(1-\gamma\lambda)}{1-\gamma-\epsilon}\right)}{\log(1/\sigma_\mu)} = \dfrac{\log\left(\frac{1-\gamma-\epsilon}{c_\mu(1-\gamma\lambda)}\right)}{\log(\sigma_\mu)},
\]
then
\[
\log\left(\dfrac{1-\gamma-\epsilon}{c_\mu(1-\gamma\lambda)}\right) > n\log(\sigma_\mu),
\]
and we have 
\[
\delta^{\mu}_n \leq c_{\mu}\sigma_{\mu}^n < \frac{1-\gamma-\epsilon}{1-\gamma\lambda}.
\]

\subsection{Proof of Theorem~\ref{thm:convergence-control}}\label{proof:convergence-control}

Let $E_\ell \Let Q_\ell - Q^\star$ be the value error of the $\ell$-th iteration of LKHI for the control problem. 
Below, we show that $\|E_{\ell+1}\|_{\infty} \leq \eta_2\, \|E_{\ell}\|_{\infty}$, from which the first claim in the statement of the theorem follows. 
Recall that $\pi_Q$ is greedy with respect to $Q$, and hence $\op{T}^{\pi_Q} Q = \op{T} Q$.
Then, using the fact that $\op{T} Q^\star = Q^\star$, we have
\begin{align*}
	\|E_{\ell+1}\|_{\infty} &=  \big\|\wt{\op{R}}^{\mu, \pi_Q}_{n, \lambda} Q_\ell - Q^\star\big\|_{\infty} 
    = \big\|Q_\ell + \wt{\op{C}}_{n,\lambda}^{\mu} (\op{T} Q_\ell - Q_\ell) - Q^\star\big\|_{\infty}  \\
    &=  \big\|\wt{\op{C}}_{n,\lambda}^{\mu} \big( (\op{T} Q_\ell - \op{T} Q^\star) - ( Q_\ell - Q^\star) \big) + ( Q_\ell - Q^\star)\big\|_{\infty}  \\
    &= \big\|(\op{I} - \wt{\op{C}}^{\mu}_{n,\lambda})(Q_\ell - Q^\star) + \wt{\op{C}}^{\mu}_{n,\lambda}(\op{T} Q_\ell - \op{T} Q^\star)\big\|_{\infty} \\
    &\leq  \|\wt{\op{C}}^{\mu}_{n,\lambda} - \op{I} \|_{\infty}\,  \|E_{\ell}\|_{\infty} + \|\wt{\op{C}}^{\mu}_{n,\lambda}\|_{\infty}\, \|\op{T} Q_\ell - \op{T} Q^\star\|_{\infty}.
\end{align*}
Then, using the fact that $\op{T}$ is a $\gamma$-contraction, we arrive at
\begin{align}\label{eq:error_dyn_LKHI_cont}
	\|E_{\ell+1}\|_{\infty} 
    &\leq \Big( \| \wt{\op{C}}^{\mu}_{n,\lambda} - \op{I} \|_{\infty} + \gamma\, \|\wt{\op{C}}^{\mu}_{n,\lambda}\|_{\infty} \Big)\, \|E_{\ell}\|_{\infty}.
\end{align}
Thus, the problem reduces to computing the $\infty$-norms of $\wt{\op{C}}^{\mu}_{n,\lambda}$ and $\wt{\op{C}}^{\mu}_{n,\lambda} - \op{I}$. 
Observe that both operators are non-negative.
In particular, using the row-sum identity~\eqref{eq:row-sum}, we have
\begin{equation*}
    \wt{\op{C}}^{\mu}_{n,\lambda}\e = \big(\ssum_{k = 0}^{n-1} (\lambda \gamma \op{P}^{\mu})^k + \xi \op{P}_\infty^\mu  \big)\e 
    = \big(\ssum_{k = 0}^{n-1} (\lambda \gamma)^k + \xi  \big)\e
    = \dfrac{1}{1-\gamma\lambda} \e.
\end{equation*}
Similarly, 
\begin{equation*}
    (\wt{\op{C}}^{\mu}_{n,\lambda}-\op{I})\e = \big(\ssum_{k = 1}^{n-1} (\lambda \gamma \op{P}^{\mu})^k + \xi \op{P}_\infty^\mu  \big)\e 
    = \big(\ssum_{k = 1}^{n-1} (\lambda \gamma)^k + \xi  \big)\e
	    = \dfrac{\gamma\lambda}{1-\gamma\lambda} \e,
\end{equation*}
where the sum is empty when $n=1$. 
Then, the non-negative operator norm identity~\eqref{id:nonneg-norm} implies that $\|\wt{\op{C}}^{\mu}_{n,\lambda}\|_\infty = 1/(1-\gamma\lambda)$ and 
$\|\wt{\op{C}}^{\mu}_{n,\lambda} -\op{I}\|_\infty = \gamma\lambda/(1-\gamma\lambda)$. 
Substituting these norms into~\eqref{eq:error_dyn_LKHI_cont}, we arrive at
\begin{equation*}
	\|E_{\ell+1}\|_{\infty} \leq \Big(\frac{\gamma\lambda}{1-\gamma\lambda} + \frac{\gamma}{1-\gamma\lambda}\Big)\, \|E_{\ell}\|_{\infty}
	= \eta_2\,\|E_{\ell}\|_{\infty}.
\end{equation*}
Finally, requiring $\eta_2 < 1$ for convergence rearranges to $\lambda < (1-\gamma)/(2\gamma)$, which completes the proof.

\subsection{Proof of Theorem~\ref{thm:lkql-value-learn}}\label{proof:lkql-value-learn}

We use the standard two-timescale stochastic approximation theorem based on the ODE method~\cite[\S8.1]{borkar2023stochastic}.
For completeness, we state the theorem below.

\begin{Thm}[Two-timescale SA with ODE limit]
	\label{thm:ttsa-ode}
    \emph{\cite[\S8.1]{borkar2023stochastic}}
	Let $\{\set{F}_k\}_{k\geq0}$ be a filtration, and consider iterates $(X_k,Y_k)\in\R^d\times\R^m$ satisfying
	\begin{align*}
		X_{k+1} & = X_k + \alpha_k\bigl(H(X_k,Y_k)+M^X_{k+1}\bigr), \\
		Y_{k+1} & = Y_k + \beta_k\bigl(G(X_k,Y_k)+M^Y_{k+1}\bigr).
	\end{align*}
	Assume:
	\begin{enumerate}[label=(\roman*)]
		\item The step sizes are positive and satisfy
		      \begin{equation*}
			      \ssum_k \alpha_k = \ssum_k \beta_k = \infty, \qquad
			      \ssum_k(\alpha_k^2 + \beta_k^2) < \infty, \qquad
			      \alpha_k/\beta_k \to 0.
		      \end{equation*}
		\item The maps $H$ and $G$ are Lipschitz.
		\item $\{M^X_{k+1}\}$ and $\{M^Y_{k+1}\}$ are martingale-difference noise terms with respect to $\{\set{F}_k\}$ and, for some $K<\infty$,
		      \begin{equation*}
			      \E\bigl[\norm{M^X_{k+1}}^2 + \norm{M^Y_{k+1}}^2 \mid \set{F}_k\bigr]
			      \leq K\bigl(1+\norm{X_k}^2+\norm{Y_k}^2\bigr).
		      \end{equation*}
		\item The iterates are almost surely bounded, that is, $\sup_k(\norm{X_k}+\norm{Y_k})<\infty$.
		\item For every fixed $X$, the ODE $\dot{Y}(t)=G(X,Y(t))$ has a unique globally asymptotically stable equilibrium $Y^\star(X)$, and $Y^\star$ is Lipschitz.
		\item The ODE $\dot{X}(t)=H(X(t),Y^\star(X(t)))$ has a unique globally asymptotically stable equilibrium $X^\star$.
	\end{enumerate}
	Then $X_k\to X^\star$ and $Y_k\to Y^\star(X^\star)$ almost surely.
\end{Thm}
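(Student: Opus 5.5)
This result is quoted from \cite[\S8.1]{borkar2023stochastic}, and I would reprove it with the ODE (asymptotic pseudotrajectory) method in two stages: first along the fast clock $\beta_k$, then along the slow clock $\alpha_k$.

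\textbf{Noise.} Set $t_k \Let \sum_{j<k}\beta_j$ and $s_k \Let \sum_{j<k}\alpha_j$. By (iii) and (iv), the martingales $\sum_k \beta_k M^Y_{k+1}$ and $\sum_k \alpha_k M^X_{k+1}$ have conditional quadratic variation bounded by a random multiple of $\sum_k(\alpha_k^2+\beta_k^2)<\infty$. A localization argument (stopping when the norm of the iterates exceeds $N$, then letting $N\to\infty$) combined with the martingale convergence theorem therefore shows that both series converge almost surely. In particular, the noise contributions vanish over any window of fixed length on either clock.

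\textbf{Fast timescale.} I would rewrite the $X$-update as
\[
X_{k+1} = X_k + \beta_k\Big(\tfrac{\alpha_k}{\beta_k}\big(H(X_k,Y_k)+M^X_{k+1}\big)\Big).
\]
By (ii) and (iv), $H$ is bounded along the trajectory, and $\alpha_k/\beta_k\to0$, so the drift of $X$ on the $t$-clock is $o(1)$. Let $(\bar X,\bar Y)(t)$ be the piecewise-linear interpolation of the pair $(X_k,Y_k)$ at times $t_k$. Using a discrete Gronwall argument together with the Lipschitz property from (ii), one shows that for every $T>0$,
\[
\sup_{t\in[t_k,t_k+T]} \big\|(\bar X,\bar Y)(t) - z^{(k)}(t)\big\| \to 0 \quad \text{almost surely},
\]
where $z^{(k)}$ solves the coupled system $\dot X=0$, $\dot Y=G(X,Y)$ started at $(X_k,Y_k)$. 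The iterates are bounded by (iv), and by (v) every trajectory of this system converges to the graph $\{(x,Y^\star(x))\}$. Bena\"{\i}m's limit-set theorem then places the internally chain-transitive limit set of $(X_k,Y_k)$ inside that graph, which gives $\|Y_k - Y^\star(X_k)\|\to0$ almost surely. This step uses uniform attraction over compacts, which I would derive from global asymptotic stability by continuous dependence of $Y^\star$ on $x$.

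\textbf{Slow timescale.} I would write
\[
X_{k+1} = X_k + \alpha_k\big(H(X_k,Y^\star(X_k)) + e_k + M^X_{k+1}\big), \qquad e_k \Let H(X_k,Y_k) - H(X_k,Y^\star(X_k)).
\]
Since $H$ is Lipschitz, $\|e_k\| \le L\,\|Y_k - Y^\star(X_k)\| \to 0$. The map $x\mapsto H(x,Y^\star(x))$ is Lipschitz because $Y^\star$ is Lipschitz by (v). This is therefore a standard stochastic approximation with a vanishing bias and convergent noise. The same interpolation argument on the $s$-clock makes $X_k$ an asymptotic pseudotrajectory of $\dot X = H(X,Y^\star(X))$. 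By (vi) and boundedness, the only internally chain-transitive invariant set of this ODE is $\{X^\star\}$, because a global Lyapunov function exists by converse Lyapunov theory. Hence $X_k\to X^\star$, and then $Y_k\to Y^\star(X^\star)$ follows from the fast-timescale step and continuity of $Y^\star$.

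\textbf{Main obstacle.} The delicate step is the fast-timescale claim $Y_k - Y^\star(X_k)\to0$. It needs the attraction of the $Y$-dynamics to be uniform in the frozen parameter $x$ over compact sets, and the slowly drifting $X_k$ must be shown not to break the tracking. I would handle this with a Lyapunov-type or compactness argument on the set $\{(x,y): x \text{ in a compact set}\}$. This is exactly where boundedness (iv) and the Lipschitz continuity of $Y^\star$ are used.
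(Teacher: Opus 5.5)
Your proposal is correct and is essentially the standard argument of \cite[\S8.1]{borkar2023stochastic}, which the paper quotes without reproducing. That argument has three steps: convergent martingale noise; tracking of the frozen system $\dot X=0$, $\dot Y=G(X,Y)$ on the fast clock to get $\|Y_k-Y^\star(X_k)\|\to 0$; and a vanishing-bias stochastic approximation of $\dot X=H(X,Y^\star(X))$ on the slow clock, closed by a converse-Lyapunov argument. One point to tighten: ``every trajectory converges to the graph'' does not by itself confine an internally chain-transitive set. Argue instead that such a set is compact and invariant, and that any compact invariant set of the frozen flow lies in the graph, because for fixed $x$ the only bounded two-sided orbit of $\dot Y=G(x,Y)$ is the globally asymptotically stable equilibrium $Y^\star(x)$; no uniformity of attraction in $x$ is needed.
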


We now apply Theorem~\ref{thm:ttsa-ode} with $X_k=Q_k$ and $Y_k=U_k$.
Let $\{\set{F}_k\}_{k\geq0}$ be the filtration generated by $(Q_0,U_0)$ and the samples collected before iteration~$k$.
Then $Q_k$ and $U_k$ are $\set{F}_k$-measurable.
Define the drift functions $g,h:\set{Q}\times\set{Q}\mapsto\set{Q}$ for each state-action pair $z\Let(s,a)$ by
\begin{align*}
	g(Q,U)(z)
	 & \Let (1-\tau)(\op{T}Q-Q)(z)+\tau\op{P}^{\mu}U(z)-U(z),          \\
	h(Q,U)(z)
	 & \Let \bigl(\op{C}_{n,\lambda}^{\mu}(\op{T}Q-Q)\bigr)(z)+\xi U(z).
\end{align*}

Let $\op{B}_\rho:\set{Q}\mapsto\set{Q}$ denote the positive diagonal operator with $\op{B}_\rho X(z) \Let \rho(z)X(z)$, where $\rho$ is the reset distribution.
By Assumption~2 in Theorem~\ref{thm:lkql-value-learn}, the initial state-action pair $z_k\Let(s_{k,0},a_{k,0})$ is independent of $\set{F}_k$ and has distribution $\rho$.
The coverage assumption on $\rho$ implies that the diagonal entries of $\op{B}_\rho$ lie in $(0,1]$.
Thus, the expected drifts of the two-timescale recursion are $H\Let\op{B}_\rho h$ and $G\Let\op{B}_\rho g$.

The following two lemmas establish global asymptotic stability for the equilibria of the fast and slow ODEs, which we use below to verify assumptions (iv)--(vi).
\begin{Lem}[Fast ODE global asymptotic stability]
	\label{lem:fast-gas}
	For any $V\in\set{Q}$ and diagonal operator $\op{B}:\set{Q}\mapsto\set{Q}$ with diagonal entries in $(0,1]$, the ODE
	\begin{equation*}
		\dot{U}(t)=\op{B}\bigl(V+\tau\op{P}^{\mu}U(t)-U(t)\bigr)
	\end{equation*}
	has the unique globally asymptotically stable equilibrium $U^\dagger=(\op{I}-\tau\op{P}^{\mu})^{-1}V$.
\end{Lem}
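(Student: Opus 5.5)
The plan is to treat the ODE as a linear time-invariant system, $\dot U = \op{B}\op{V}' + \op{B}(\tau\op{P}^\mu - \op{I})U$ with $\op{V}'\equiv V$ constant, and to show that the system matrix $\op{K}\Let\op{B}(\tau\op{P}^\mu-\op{I})$ is Hurwitz, i.e., all its eigenvalues have negative real part. Once this holds, the equilibrium is unique. It is the solution of $\op{B}(V+\tau\op{P}^\mu U-U)=0$, and since $\op{B}$ is invertible (positive diagonal), this is equivalent to $(\op{I}-\tau\op{P}^\mu)U=V$. The operator $\op{I}-\tau\op{P}^\mu$ is invertible because $\|\tau\op{P}^\mu\|_\infty=\tau<1$, which gives the Neumann series $(\op{I}-\tau\op{P}^\mu)^{-1}=\sum_k(\tau\op{P}^\mu)^k$. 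So $U^\dagger=(\op{I}-\tau\op{P}^\mu)^{-1}V$. Global asymptotic stability then follows for the linear system, since the error $E(t)\Let U(t)-U^\dagger$ satisfies $\dot E=\op{K}E$.

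For the Hurwitz property I would use the $\infty$-norm, which is the natural norm for this paper. Write $\op{K}=-\op{B}+\tau\op{B}\op{P}^\mu$. Row $z$ has diagonal entry $-b_z+\tau b_z\op{P}^\mu_{zz}$, where $b_z\in(0,1]$ is the $z$-th diagonal entry of $\op{B}$. Its off-diagonal entries are nonnegative, with absolute sum $\tau b_z(1-\op{P}^\mu_{zz})$. The diagonal entry satisfies $-b_z(1-\tau\op{P}^\mu_{zz})$, and
\[
b_z(1-\tau\op{P}^\mu_{zz})-\tau b_z(1-\op{P}^\mu_{zz})=b_z(1-\tau)>0 .
\]
So $\op{K}$ is strictly row diagonally dominant with negative diagonal. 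By Gershgorin's theorem, every eigenvalue lies in a disc centered at a negative real number with radius strictly smaller than its distance to $0$, and is therefore in the open left half plane. Equivalently, the log-norm of $\op{K}$ in $\infty$-norm is $\max_z(-b_z(1-\tau))\le-\min_z b_z(1-\tau)<0$. This gives the explicit exponential bound $\|E(t)\|_\infty\le e^{-(1-\tau)\min_z b_z\,t}\|E(0)\|_\infty$, which proves global exponential (hence asymptotic) stability directly, without passing through eigenvalues.

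The main obstacle is modest: the diagonal $\op{B}$ destroys the symmetry one might want for a quadratic Lyapunov function, so an $\ell_2$ argument is awkward. The fix is to use the $\infty$-norm log-norm (matrix measure) bound, which handles the diagonal scaling row by row. An alternative is the Lyapunov function $\|U-U^\dagger\|_\infty$, handled via its upper Dini derivative at the maximizing coordinate, where the same row computation gives a decrease rate of $(1-\tau)\min_z b_z$. I expect the only care needed is in using $b_z>0$ together with $\tau<1$ for the strict inequality.
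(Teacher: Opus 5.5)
Your proof is correct, and it takes a more explicit route than the paper. The paper notes only that $U\mapsto V+\tau\op{P}^{\mu}U$ is a $\tau$-contraction in $\infty$-norm with fixed point $U^\dagger$. It then cites the general result in Borkar (\S6.4): for $\dot U=\op{B}(F(U)-U)$ with $F$ an $\infty$-norm contraction and $\op{B}$ positive diagonal, the fixed point of $F$ is the unique globally asymptotically stable equilibrium.

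You instead use linearity. Your row-by-row computation shows that the $\infty$-matrix measure of $\op{B}(\tau\op{P}^{\mu}-\op{I})$ equals $-(1-\tau)\min_z b_z$. This is self-contained, avoids the external theorem, and gives an explicit exponential rate. It also makes the uniqueness of the equilibrium explicit, via invertibility of $\op{B}$ and of $\op{I}-\tau\op{P}^{\mu}$, where the paper leaves this to the citation. Once you have the log-norm bound, the Gershgorin/Hurwitz step is redundant but harmless.

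The cost is that the linear-systems argument does not carry over to the slow ODE of Lemma~\ref{lem:slow-gas}. There $\op{T}$ contains a maximum, so the vector field is only Lipschitz and there is no system matrix to test. The paper's single citation covers both lemmas at once.

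Your fallback, the Lyapunov function $\|U-U^\dagger\|_\infty$ with its upper Dini derivative at a maximizing coordinate, is essentially the argument behind the cited result. That version extends to the nonlinear case, because it uses only the contraction property of the map, not its linearity.
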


\begin{proof}
	Recall that $\tau\in[0,1)$ and $\op{P}^{\mu}$ is stochastic.
	It follows that, for fixed $V$, the map $U \mapsto V+\tau\op{P}^{\mu}U$ is a $\tau$-contraction in $\infty$-norm with fixed point $U^\dagger$.
	Applying the argument in~\cite[\S6.4]{borkar2023stochastic} for diagonally scaled ODEs, we conclude that $U^\dagger$ is the unique globally asymptotically stable equilibrium.
\end{proof}

\begin{Lem}[Slow ODE global asymptotic stability]
	\label{lem:slow-gas}
	Let $\widehat{\op{T}}\colon \set{Q} \mapsto \set{Q}$ be $\gamma$-Lipschitz in $\infty$-norm with fixed point $Q^\dagger$, and $\op{B}:\set{Q}\mapsto\set{Q}$ be a diagonal operator with diagonal entries in $(0,1]$.
	If $\lambda < (1-\gamma)/(2\gamma)$, the ODE
	\begin{equation*}
		\dot{Q}(t) = \op{B}\,\bigl(\op{C}_{n,\lambda}^\mu + \xi\op{A}_\tau^\mu\bigr)\bigl(\widehat{\op{T}}Q(t) - Q(t)\bigr)
	\end{equation*}
	has $Q^\dagger$ as the unique globally asymptotically stable equilibrium.
\end{Lem}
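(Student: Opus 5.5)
The plan is to exhibit the right-hand side as $\op{B}(\op{F}Q - Q)$ for an operator $\op{F}$ that is a contraction in $\infty$-norm with fixed point $Q^\dagger$. We then invoke the same diagonally scaled contraction argument of~\cite[\S6.4]{borkar2023stochastic} already used in Lemma~\ref{lem:fast-gas}. Set $\wh{\op{C}} \Let \op{C}_{n,\lambda}^\mu + \xi\op{A}_\tau^\mu$ and define
\[
\op{F}Q \Let Q + \wh{\op{C}}\bigl(\widehat{\op{T}}Q - Q\bigr),
\]
so that the ODE reads $\dot Q = \op{B}(\op{F}Q - Q)$. Clearly $\op{F}Q^\dagger = Q^\dagger$. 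Conversely, any equilibrium satisfies $\wh{\op{C}}(\widehat{\op{T}}Q - Q)=0$, because $\op{B}$ is invertible. Uniqueness of the equilibrium will therefore follow from the contraction property together with invertibility of $\wh{\op{C}}$, or more simply from the contraction alone, since equilibria coincide with fixed points of $\op{F}$.

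The key computation mirrors the proof of Theorem~\ref{thm:convergence-control}, with $\op{P}_\infty^\mu$ replaced by $\op{A}_\tau^\mu$. For $Q,Q'$ we write
\[
\op{F}Q - \op{F}Q' = (\op{I} - \wh{\op{C}})(Q-Q') + \wh{\op{C}}(\widehat{\op{T}}Q - \widehat{\op{T}}Q'),
\]
which gives $\|\op{F}Q-\op{F}Q'\|_\infty \le (\|\wh{\op{C}}-\op{I}\|_\infty + \gamma\|\wh{\op{C}}\|_\infty)\|Q-Q'\|_\infty$. The operator $\op{A}_\tau^\mu = (1-\tau)\sum_{k\ge0}\tau^k(\op{P}^\mu)^k$ is non-negative and satisfies $\op{A}_\tau^\mu\e=\e$. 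Hence both $\wh{\op{C}}$ and $\wh{\op{C}}-\op{I} = \sum_{k=1}^{n-1}(\gamma\lambda\op{P}^\mu)^k + \xi\op{A}_\tau^\mu$ are non-negative. By~\eqref{id:nonneg-norm} and the row-sum identity~\eqref{eq:row-sum}, their norms are exactly $1/(1-\gamma\lambda)$ and $\gamma\lambda/(1-\gamma\lambda)$. The Lipschitz constant of $\op{F}$ is therefore $\eta_2$, which is $<1$ precisely when $\lambda<(1-\gamma)/(2\gamma)$. So $\op{F}$ is an $\eta_2$-contraction in $\infty$-norm with unique fixed point $Q^\dagger$.

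The main obstacle is the stability step. We must pass from ``$\op{F}$ is an $\infty$-norm contraction'' to global asymptotic stability of $\dot Q = \op{B}(\op{F}Q-Q)$ when $\op{B}$ is a positive diagonal scaling. I will take $\|Q-Q^\dagger\|_\infty$ as the Lyapunov function and control its upper Dini derivative. Let $z$ be a maximizing coordinate of $|Q(z)-Q^\dagger(z)|$, with value $M$. Then $(\op{F}Q-Q)(z)\,\mathrm{sign}(Q(z)-Q^\dagger(z)) \le \eta_2 M - M$. Multiplying by the diagonal entry $b_z \ge b_{\min}>0$ gives $D^+\|Q-Q^\dagger\|_\infty \le -b_{\min}(1-\eta_2)\|Q-Q^\dagger\|_\infty$, and exponential convergence follows. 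This is exactly the argument cited from~\cite[\S6.4]{borkar2023stochastic}, and I would present it briefly via Danskin's formula for the derivative of a max.
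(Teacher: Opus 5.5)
Your proposal is correct and follows the paper's proof. Like the paper, you show that $Q\mapsto Q+(\op{C}_{n,\lambda}^\mu+\xi\op{A}_\tau^\mu)(\widehat{\op{T}}Q-Q)$ is an $\eta_2$-contraction as in the proof of Theorem~\ref{thm:convergence-control}, using that $\op{A}_\tau^\mu$ is non-negative with $\op{A}_\tau^\mu\e=\e$; the only difference is that you prove stability of the diagonally scaled ODE directly via the Dini derivative of $\|Q-Q^\dagger\|_\infty$, which gives the explicit rate $b_{\min}(1-\eta_2)$, whereas the paper cites \cite[Thm.~4.2]{borkar1997analog} together with \cite[\S6.4]{borkar2023stochastic}.
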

\begin{proof}
	The map $Q \mapsto Q + (\op{C}_{n,\lambda}^\mu + \xi\op{A}_\tau^\mu)(\widehat{\op{T}}Q - Q)$ is an $\eta_2$-contraction in $\infty$-norm with fixed point $Q^\dagger$, using the same arguments as in the proof of Theorem~\ref{thm:convergence-control}.
	In particular, observe that the operator $\op{A}_\tau^\mu$ shares the same norm identities used there for $\op{P}_\infty^\mu$.
	Furthermore, by~\cite[Thm.~4.2]{borkar1997analog}, for any $\infty$-norm contraction $\mathcal{F}$ with fixed point $X^\star$ and any scalar $a > 0$, the ODE $\dot{X} = a(\mathcal{F}X - X)$ has $X^\star$ as its unique globally asymptotically stable equilibrium.
	Combining this with the argument in~\cite[\S6.4]{borkar2023stochastic} for diagonally scaled ODEs, we conclude that $Q^\dagger$ is globally asymptotically stable.
\end{proof}

In what follows, we verify that assumptions (i)--(vi) of Theorem~\ref{thm:ttsa-ode} hold for the LKQL recursion.

\noindent\textbf{(i) Step sizes.} Assumption~(i) coincides with Assumption~1 of Theorem~\ref{thm:lkql-value-learn} and therefore holds trivially.

\noindent\textbf{(ii) Lipschitzness.} 
With $\mu\in\set{\Pi}$ fixed, the operators $\op{P}^\mu$, $\op{C}_{n,\lambda}^\mu$, and $\op{A}_\tau^\mu \Let (1-\tau)(\op{I}-\tau\op{P}^\mu)^{-1}$ are constant non-negative linear maps with norms
\begin{equation*}
	\|\op{P}^\mu\|_\infty = \|\op{A}_\tau^\mu\|_\infty = 1,
	\qquad
	\|\op{C}_{n,\lambda}^\mu\|_\infty = \frac{1-(\gamma\lambda)^n}{1-\gamma\lambda} \leq \frac{1}{1-\gamma\lambda}.
\end{equation*}
The Bellman optimality operator $\op{T}$ is a $\gamma$-contraction in $\infty$-norm.
Hence, the Bellman residual operator $\op{T} - \op{I}$ is $(1+\gamma)$-Lipschitz in $\infty$-norm.

\emph{Slow drift $H$.} 
For any $(Q,U),(Q',U')\in\set{Q}\times\set{Q}$, using $\norm{\op{B}_\rho}_\infty\leq 1$, we have
\begin{equation*}
	\norm{H(Q,U)-H(Q',U')}_\infty
	\leq \frac{1+\gamma}{1-\gamma\lambda}\norm{Q-Q'}_\infty + \xi\norm{U-U'}_\infty.
\end{equation*}

\emph{Fast drift $G$.}
Similarly, using $\norm{\op{P}^\mu}_\infty = 1$,
\begin{equation*}
	\norm{G(Q,U)-G(Q',U')}_\infty
	\leq (1-\tau)(1+\gamma)\norm{Q-Q'}_\infty + (1+\tau)\norm{U-U'}_\infty.
\end{equation*}

\emph{Fast equilibrium $U^\star$.}
The fast equilibrium $U^\star(Q) = \op{A}_\tau^\mu(\op{T}Q-Q)$ satisfies the Lipschitz bound
\begin{equation*}
	\norm{U^\star(Q)-U^\star(Q')}_\infty
	\leq \norm{\op{A}_\tau^\mu}_\infty\,(1+\gamma)\norm{Q-Q'}_\infty
	= (1+\gamma)\norm{Q-Q'}_\infty.
\end{equation*}
This verifies (ii) and the Lipschitz hypothesis on $Y^\star(\cdot)=U^\star(\cdot)$ in (v).

\noindent\textbf{(iii) Noise.}
Write the LKQL update~\eqref{eq:lkql-update} in full-vector form
\begin{align*}
	&Q_{k+1} = Q_k + \alpha_k\Delta^Q_{k+1}, \quad \Delta^Q_{k+1}(z_k) \Let \ssum_{j=0}^{n-1}(\gamma\lambda)^j\,\delta^Q_{k,j} + \xi\,U_k(z_k),\\
	&U_{k+1} = U_k + \beta_k\Delta^U_{k+1}, \quad \Delta^U_{k+1}(z_k) \Let \delta^U_{k,0},
\end{align*}
where $\Delta^Q_{k+1}$ and $\Delta^U_{k+1}$ are sparse update vectors supported only at $z_k$, so $\Delta^Q_{k+1}(z) = \Delta^U_{k+1}(z) = 0$ for $z\neq z_k$.
For every $z\in\set{S}\times\set{A}$, conditioning on $\set{F}_k$ and applying the Markov property gives
\begin{align*}
	\E\left[\Delta^Q_{k+1}(z) \mid \set{F}_k\right]
	 & = \rho(z)\left[\ssum_{j=0}^{n-1}(\gamma\lambda)^j
	     \bigl((\op{P}^\mu)^j(\op{T}Q_k-Q_k)\bigr)(z)+\xi U_k(z)\right] \\
	 & = \rho(z)h(Q_k,U_k)(z)
	 = H(Q_k,U_k)(z), \\
	\E\left[\Delta^U_{k+1}(z) \mid \set{F}_k\right]
	 & = \rho(z)\left[(1-\tau)(\op{T}Q_k-Q_k)(z)
	     +\tau(\op{P}^\mu U_k)(z)-U_k(z)\right] \\
	 & = \rho(z)g(Q_k,U_k)(z)
	 = G(Q_k,U_k)(z).
\end{align*}
Consequently, the full-vector conditional expectations satisfy
\begin{equation*}
	\E\left[\Delta^Q_{k+1}\mid\set{F}_k\right]=H(Q_k,U_k),
	\qquad
	\E\left[\Delta^U_{k+1}\mid\set{F}_k\right]=G(Q_k,U_k).
\end{equation*}
We therefore define the martingale-difference noise terms by
\begin{equation*}
	M^Q_{k+1} \Let \Delta^Q_{k+1} - H(Q_k, U_k)
	\;\; \text{and} \;\;
	M^U_{k+1} \Let \Delta^U_{k+1} - G(Q_k, U_k),
\end{equation*}
so the LKQL recursion takes the form of Theorem~\ref{thm:ttsa-ode} with drifts $H=\op{B}_\rho h$ and $G=\op{B}_\rho g$.
By construction, $\E[M^Q_{k+1}\mid\set{F}_k]=\E[M^U_{k+1}\mid\set{F}_k]=0$.

We next bound the noise second moments.
Each TD error satisfies $|\delta^{Q}_{k,j}| \leq \norm{r}_\infty + (1+\gamma)\norm{Q_k}_\infty$.
Hence, for some finite constants $L_1,L_2$, we obtain
\begin{align*}
	\big\|{\Delta^Q_{k+1}}\big\|_2 = \big|{\Delta^Q_{k+1}(z_k)}\big|
	 & \leq \frac{\norm{r}_\infty + (1+\gamma)\norm{Q_k}_\infty}{1-\gamma\lambda} + \xi\norm{U_k}_\infty
	\leq L_1\bigl(1 + \norm{Q_k}_\infty + \norm{U_k}_\infty\bigr),                                          \\
	\big\|{\Delta^U_{k+1}}\big\|_2 =\big|\Delta^U_{k+1}(z_k)\big|
	 & \leq (1-\tau)|\delta^Q_{k,0}| + (1+\tau)\norm{U_k}_\infty \leq L_2(1+\norm{Q_k}_\infty+\norm{U_k}_\infty).
\end{align*}
Combining these bounds with the linear growth of $H$ and $G$ established in (ii), we obtain
\begin{equation*}
	\E\bigl[\big\|M^Q_{k+1}\big\|_2^2 + \big\|M^U_{k+1}\big\|_2^2 \;\big|\; \set{F}_k\bigr]
	\leq K\bigl(1 + \norm{Q_k}_\infty^2 + \norm{U_k}_\infty^2\bigr)
	\leq K\bigl(1 + \norm{Q_k}_2^2 + \norm{U_k}_2^2\bigr),
\end{equation*}
for some $K<\infty$, where the second inequality uses $\norm{\cdot}_\infty \leq \norm{\cdot}_2$. This verifies (iii).

\noindent\textbf{(iv) Boundedness.} To establish boundedness of the iterates, we apply the two-timescale stability theorem of~\cite{lakshminarayanan2017stability}.
This theorem guarantees a.s.\ boundedness $\sup_k(\norm{Q_k}_\infty + \norm{U_k}_\infty) < \infty$ under conditions~(i)--(iii) of Theorem~\ref{thm:ttsa-ode} together with the following two additional scaling conditions:
\begin{enumerate}[label=(\alph*)]
	\item the rescaled fast drift $G_c(Q, U) \Let G(c Q, c U)/c$ converges uniformly on compacts as $c \to \infty$ to a limit $G_\infty$, and the ODE $\dot{U}_\infty = G_\infty(Q, U)$ has a unique globally asymptotically stable equilibrium $U^\star_\infty(Q)$ that is Lipschitz in $Q$ with $U^\star_\infty(0) = 0$;
	\item the rescaled slow drift at the fast equilibrium, $H_c(Q) \Let H(c Q,c U^\star_\infty(Q))/c$, converges uniformly on compacts as $c \to \infty$ to a limit $H_\infty$, and the ODE $\dot{Q}_\infty = H_\infty(Q)$ has the origin as its unique globally asymptotically stable equilibrium.
\end{enumerate}
Conditions (i)--(iii) are verified above.
We verify (a) and (b) next.

For $c \geq 1$, scaling $Q$ by $c$ leaves all linear operators in $H$ and $G$ unchanged and only affects the reward term in $\op{T}$. In particular, we have $\frac{\op{T}(c Q)}{c} = \frac{r}{c} + \gamma\op{P}^{\pi_Q}Q$ with $\lim_{c \to \infty} \frac{\op{T}(c Q)}{c} = \gamma\op{P}^{\pi_Q}Q$. 
The map $Q \mapsto \gamma\op{P}^{\pi_Q}Q$ is $\gamma$-Lipschitz in $\infty$-norm and has the unique fixed point $0$. This follows from the same argument as in (ii), applied to the reward-free part of $\op{T}$.

\emph{Scaled fast equilibrium.}
Since $\op{T}(c Q)/c \to \gamma\op{P}^{\pi_Q}Q$ uniformly on compacts as $c \to \infty$, the rescaled fast drift converges uniformly on compacts to
\begin{equation*}
	G_\infty(Q,U) \Let \op{B}_\rho\bigl((1-\tau)(\gamma\op{P}^{\pi_Q}Q - Q) + \tau\op{P}^\mu U - U\bigr).
\end{equation*}
For fixed $Q$, applying Lemma~\ref{lem:fast-gas} with $V = (1-\tau)(\gamma\op{P}^{\pi_Q}Q - Q)$ yields the unique globally asymptotically stable equilibrium $U^\star_\infty(Q) = \op{A}_\tau^\mu(\gamma\op{P}^{\pi_Q}Q - Q)$. 
Observe that $U^\star_\infty(0) = 0$ and that $U^\star_\infty$ is Lipschitz in $Q$ by the same arguments as for $U^\star$ in (ii), with $\op{T}$ replaced by $\gamma\op{P}^{\pi_Q}$.

\emph{Scaled slow ODE.}
Once again, using the uniform convergence $\op{T}(c Q)/c \to \gamma\op{P}^{\pi_Q}Q$ on compacts as $c \to \infty$, and substituting $U^\star_\infty(Q)$ into the rescaled slow drift, we have $H_c(Q) \to H_\infty(Q)$ uniformly on compacts, where
\begin{equation*}
	H_\infty(Q) \Let \op{B}_\rho\bigl(\op{C}_{n,\lambda}^\mu + \xi\op{A}_\tau^\mu\bigr)(\gamma\op{P}^{\pi_Q}Q - Q).
\end{equation*}
Since $\gamma\op{P}^{\pi_Q}$ is $\gamma$-Lipschitz in $\infty$-norm and has fixed point $0$, Lemma~\ref{lem:slow-gas} applied with $\widehat{\op{T}} = \gamma\op{P}^{\pi_Q}$ and $Q^\dagger = 0$ implies that the origin is the unique globally asymptotically stable equilibrium of the scaled slow ODE $\dot{Q}_\infty(t) = H_\infty(Q_\infty(t))$.

Conditions (i)--(iii), (a), and (b) allow us to apply the stability theorem of~\cite{lakshminarayanan2017stability}, which gives $\sup_{k\geq 0}\bigl(\norm{Q_k}_\infty + \norm{U_k}_\infty\bigr) < \infty$ almost surely, verifying (iv).

\noindent\textbf{(v) Fast-scale ODE.} For $V \Let (1-\tau)(\op{T}Q-Q)$ and the fast ODE $\dot{U}(t)=\op{B}_\rho\bigl(V + \tau\op{P}^{\mu}U(t) - U(t)\bigr)$,
Lemma~\ref{lem:fast-gas} yields the unique globally asymptotically stable equilibrium $U^\star(Q) = \op{A}_{\tau}^{\mu}(\op{T}Q-Q)$. 
Together with the Lipschitzness of $Q \mapsto U^\star(Q)$ shown in (ii), this verifies (v).

\noindent\textbf{(vi) Slow-scale ODE.} On the slow timescale, the fast iterate is equilibrated at $U^\star(Q)$, giving the limiting slow ODE $\dot{Q}(t)=\op{B}_\rho\bigl(\op{C}_{n,\lambda}^{\mu}+\xi\op{A}_{\tau}^{\mu}\bigr)(\op{T}Q(t)-Q(t))$. 
For $\widehat{\op{T}} = \op{T}$ and $Q^\dagger = Q^\star$, Lemma~\ref{lem:slow-gas} yields $Q^\star$ as the unique globally asymptotically stable equilibrium, verifying (vi).

Finally, combining (i)--(vi), we have $Q_k\to Q^\star$ and $U_k\to U^\star(Q^\star)=0$ almost surely.

\section{Experimental Setup}\label{appx:exp-setup}

\begin{table}[t]
	\centering
	\caption{Hyperparameters and their ranges for the continuous-control experiments}
	\label{tab:hparams}
	\begin{tabular}{lll}
		\toprule
		Symbol         & Description                            & Value                                              \\
		\midrule
		$\gamma$       & discount factor                        & $0.99$                                             \\
		$\lambda$      & return trace coefficient               & $0.95$                                             \\
	    $n$            & truncation                             & $[4, \text{rollout length}]$                        \\
	    --             & LK-term truncation                             & $\text{rollout length}$                        \\
		$\lambda_U$    & LK-term trace coefficient              & $0.95$                                             \\
		$\tau$       & LK-term discount factor                & $[0.99, 0.9999]$                          \\
		$\beta_U$      & LK-term loss coefficient               & $1$                          \\
		--             & LK-term learning rate                  & $\{1, 3\}\times\{10^{-3}, 10^{-2}\}$ (constant)    \\
		$\epsilon_{\mathrm{clip}}$ & PPO clipping parameter                 & $0.2$                                              \\
		$\beta_\pi$    & policy loss coefficient                & $2$                                              \\
		--             & entropy coefficient                    & $0$                                                \\
		--             & actor-critic learning rate             & $3\times10^{-4}$ (linearly annealed)               \\
		--             & gradient-norm clip                     & $0.5$                                              \\
		--             & NAF exploration noise scale            & $\{0.1, 0.2, 0.25\}$                               \\
		$N$            & parallel actors                        & $16$                                               \\
		$T$            & outer iterations                       & $500$                                              \\
		--             & minibatch size                         & $32$                                               \\
		--             & epochs per iteration                   & $10$                                               \\
		$S$            & seeds                                  & $32$                                               \\
		--            & rollout length                         & $\{32, 64, 128, 256, 512\}$                        \\
		\bottomrule
	\end{tabular}
\end{table}

We implement the algorithms in JAX~\cite{jax2018github} and evaluate on \texttt{Hopper-v5}, \texttt{Walker2d-v5} and \texttt{HalfCheetah-v5} from MuJoCo~\cite{todorov2012mujoco}, simulated with its JAX-based MJX engine.
LKQL and its baselines share the same actor-critic backbone and sampling loop (Algorithm~\ref{alg:lkql}), where the baselines are recovered exactly by disabling the LK term ($U_\omega \equiv 0$), yielding GAE in the on-policy setting and THQL in the off-policy setting.
We use the \texttt{rlax} package~\cite{deepmind2020jax} to compute the truncated returns for both the values and the LK term.

\textbf{Architectures.}
All networks are multilayer perceptrons with $\tanh$ activations, layer normalization, and orthogonal initialization.
The on-policy backbone parameterizes the policy $\pi_\theta$ with hidden sizes $(64, 64)$ and a state-independent log standard deviation, and the value $V_\phi$ with $(128, 128)$.
The off-policy NAF backbone parameterizes the state value $V_\phi$ with $(128, 128)$, the deterministic mean $\bar\pi_\theta$ with $(64, 64)$, and the (diagonal) precision $P_\phi$ with $(128, 128)$.
In both estimators, the LK term $U_\omega$ is a separate network with the same hidden sizes as the critic, $(128, 128)$, mapping the observation to a scalar.

\textbf{Optimization.}
We optimize with Adam ($\epsilon_{\mathrm{Adam}} = 10^{-5}$) and clip gradients to a global norm of $0.5$.
The actor and critic share a single optimizer with learning rate $3\times10^{-4}$ annealed linearly to zero over training, whereas the LK term $U_\omega$ uses a separate optimizer with a constant learning rate, a choice motivated by the two-timescale analysis of Theorem~\ref{thm:lkql-value-learn}.
Each run uses $N = 16$ parallel actors and $T = 500$ outer iterations.
Each iteration collects a rollout per actor and performs $10$ epochs of updates with minibatches of size $32$.
We report the final return of the deterministic policy, averaged over $S = 32$ seeds.
The rollout length is varied over the set $\{32, 64, 128, 256, 512\}$.
Since each iteration collects $(\text{rollout length} \times N)$ transitions, the total environment steps scale with the rollout length.

\textbf{Hyperparameters.}
Table~\ref{tab:hparams} lists the shared settings.
The baselines are tuned over the rollout length and, for NAF, the exploration noise scale. LKQL inherits the resulting best settings and tunes only its own hyperparameters, namely, the LK trace coefficient $\lambda_U$, the loss coefficient $\beta_U$, the LK-term discount factor $\tau$, the $n$-step truncation horizon, and the optimizer parameters for $\omega$.
After hyperparameter tuning, we reevaluate each method with its selected hyperparameters on a separate set of seeds to prevent tuning bias~\cite{eimer2023hyperparameters}, and report the results in Figure~\ref{fig:mujoco-results}.

\bibliographystyle{apalike} 
\begin{small}
\bibliography{references}
\end{small}

\end{document}

%% file: arxiv/AMIN_style.tex
\makeatletter
\def\@settitle{\begin{center}%
		\baselineskip14\p@\relax
		\normalfont\LARGE\bfseries
		\@title
	\end{center}%
}

\def\section{\@startsection{section}{1}%
	\z@{.7\linespacing\@plus\linespacing}{.5\linespacing}%
	{\normalfont\large\bfseries}}

\def\subsection{\@startsection{subsection}{2}%
	\z@{.5\linespacing\@plus.7\linespacing}{.5\linespacing}%
	{\normalfont\bfseries}}

\def\@setauthors{%
  \begingroup
  \def\thanks{\protect\thanks@warning}%
  \trivlist
  \centering\footnotesize \@topsep30\p@\relax
  \advance\@topsep by -\baselineskip
  \item\relax
  \author@andify\authors
  \def\\{\protect\linebreak}%
  \authors%
  \ifx\@empty\contribs
  \else
    ,\penalty-3 \space \@setcontribs
    \@closetoccontribs
  \fi
  \endtrivlist
  \endgroup
}

\makeatother

\usepackage[table, xcdraw, usenames, dvipsnames]{xcolor}
\definecolor{darkblue}{rgb}{0.0, 0.0, 0.45}
\definecolor{darkgreen}{rgb}{0.0, 0.45, 0}
\usepackage[colorlinks	= true,
raiselinks	= true,
linkcolor	= darkblue, 
citecolor	= Mahogany,
urlcolor	= darkgreen,
pdfauthor	= {},
pdftitle	= {},
pdfkeywords	= {},
pdfsubject	= {},
plainpages	= false]{hyperref}

\pdfoutput=1
\date{\today}

%% file: arxiv/notation.tex
\newcommand{\set}[1]{\bm{\mathrm{#1}}}   
\newcommand{\op}[1]{\mathcal{#1}}   
\newcommand{\prob}{\mathds{P}}

\newcommand{\e}{\bm{1}}
\DeclareMathOperator*{\argmax}{arg\,max}

\newcommand{\wt}{\widetilde}
\newcommand{\norm}[1]{\left\Vert #1 \right\Vert}

\newcommand{\wh}{\widehat}
\newcommand{\Let}{\triangleq}
\newcommand{\E}{\mathds{E}}
\newcommand{\R}{\mathbb{R}}
\def\ssum{\begingroup\textstyle \sum\endgroup}

\theoremstyle{plain}
\newtheorem{Thm}{Theorem}[section]

\newtheorem{Lem}[Thm]{Lemma}
\newtheorem{Cor}[Thm]{Corollary}

\newtheorem{Rem}[Thm]{Remark}


%% file: figure/lk_operator_diagram.tex
\begin{figure}[t]
\centering
\begin{tikzpicture}[
    state/.style={circle, draw, minimum size=3mm, inner sep=0pt},
    action/.style={circle, fill=black, minimum size=1.8mm, inner sep=0pt},
    side/.style={circle, fill=black, minimum size=1.8mm, inner sep=0pt},
    mixed/.style={circle, draw, thick, double, minimum size=5mm, inner sep=0pt, font=\scriptsize},
    arr/.style={-{latex}, semithick, shorten >=0.5mm, shorten <=0.5mm},
    sarr/.style={-{latex}, thin, shorten >=0.5mm, shorten <=0.5mm},
]

\node[state, ] (s0) at (0, 0) {};
\node[action, right=10mm of s0] (a0) {};
\node[state, right=10mm of a0] (s1) {};
\node[action, right=10mm of s1] (a1) {};
\node[right=10mm of a1, font=\normalsize] (dots) {$\cdots$};
\node[state, right=10mm of dots] (sn) {};
\node[action, right=10mm of sn] (an) {};
\node[mixed, right=10mm of an] (m) {$\infty$};

\draw[arr] (s0) -- (a0);
\draw[arr] (a0) -- node[above, font=\scriptsize] {$\gamma\lambda$} (s1);
\draw[arr] (s1) -- (a1);
\draw[arr] (a1) -- node[above, font=\scriptsize] {$\gamma\lambda$} (dots);
\draw[arr] (dots) -- (sn);
\draw[arr] (sn) -- (an);
\draw[arr] (an) -- node[above, font=\scriptsize] {$\gamma\lambda$} (m);

\node[side, above=3mm of a1] (a1u) {};
\node[side, below=3mm of a1] (a1d) {};
\draw[sarr] (s1) -- (a1u);
\draw[sarr] (s1) -- (a1d);

\node[side, above=3mm of an] (anu) {};
\node[side, below=3mm of an] (and) {};
\draw[sarr] (sn) -- (anu);
\draw[sarr] (sn) -- (and);

\begin{scope}[on background layer]
\fill[gray!15, rounded corners=3pt] ($(an.east)+(2mm,-6mm)$) rectangle ($(m.east)+(16mm,6mm)$);
\end{scope}

\draw[arr] (m) to[out=50, in=-10, looseness=4] node[right, font=\scriptsize] {$\gamma\lambda\op{P}_\infty^\mu$} (m);

\end{tikzpicture}
\caption{Backup diagram of the LK Harutyunyan operator.
Hollow circles denote states and filled circles denote actions.
The first $n$ steps follow the chain induced by $\mu$.
The shaded region depicts tail completion with the LK and the omitted tail is replaced by stationary propagation from step $n$ onward.
Removing the LK term recovers the $n$-step Harutyunyan operator.}
\label{fig:backup}
\end{figure}